\documentclass{article}

\usepackage{microtype}
\usepackage{multirow}
\usepackage{graphicx}
\usepackage{subcaption}
\usepackage{booktabs} % for professional tables

\usepackage{hyperref}

\usepackage[preprint]{icml2026}

\usepackage{amsmath}
\usepackage{amssymb}
\usepackage{mathtools}
\usepackage{amsthm}
\usepackage{cases}

\usepackage[capitalize,noabbrev]{cleveref}
\usepackage{multirow}
\usepackage{tabularray}
\theoremstyle{plain}
\newtheorem{theorem}{Theorem}[section]

\newtheorem{lemma}[theorem]{Lemma}
\newtheorem{corollary}[theorem]{Corollary}
\theoremstyle{definition}

\newtheorem{assumption}[theorem]{Assumption}
\theoremstyle{remark}

\newcommand{\ie}{\textit{i}.\textit{e}., }
\newcommand{\eg}{\textit{e}.\textit{g}., }

\usepackage[textsize=tiny]{todonotes}

\icmltitlerunning{Resolving Blind Inverse Problems under Dynamic Range Compression via  Structured Forward Operator Modeling }

\begin{document}

\twocolumn[
  \icmltitle{Resolving Blind Inverse Problems under Dynamic Range Compression via  Structured Forward Operator Modeling}

  % It is OKAY to include author information, even for blind submissions: the
  % style file will automatically remove it for you unless you've provided
  % the [accepted] option to the icml2026 package.

  % List of affiliations: The first argument should be a (short) identifier you
  % will use later to specify author affiliations Academic affiliations
  % should list Department, University, City, Region, Country Industry
  % affiliations should list Company, City, Region, Country

  % You can specify symbols, otherwise they are numbered in order. Ideally, you
  % should not use this facility. Affiliations will be numbered in order of
  % appearance and this is the preferred way.
  % \icmlsetsymbol{equal}{*}

  \begin{icmlauthorlist}
    \icmlauthor{Muyu Liu}{shanghaitech}
    \icmlauthor{Xuanyu Tian}{shanghaitech}
    \icmlauthor{Chenhe Du}{shanghaitech}
    \icmlauthor{Qing Wu}{shanghaitech}
    \icmlauthor{Hongjiang Wei}{sjtu}
    \icmlauthor{Yuyao Zhang}{shanghaitech}

  \end{icmlauthorlist}

  \icmlaffiliation{shanghaitech}{ShanghaiTech University}
  \icmlaffiliation{sjtu}{Shanghai Jiao Tong University}

  \icmlcorrespondingauthor{Yuyao Zhang}{zhangyy8@shanghaitech.edu.cn}

  % You may provide any keywords that you find helpful for describing your
  % paper; these are used to populate the "keywords" metadata in the PDF but
  % will not be shown in the document
  \icmlkeywords{Machine Learning, ICML}
  
  \vskip 0.3in
]

\printAffiliationsAndNotice{}

\begin{abstract}
Recovering radiometric fidelity from unknown dynamic range compression (UDRC), such as low-light enhancement and HDR reconstruction, is a challenging blind inverse problem, due to the unknown forward model and irreversible information loss introduced by compression.
To address this challenge, we first identify monotonicity as the fundamental physical invariant shared across UDRC tasks. Leveraging this insight, we introduce the \textbf{cascaded monotonic Bernstein} (CaMB) operator to parameterize the unknown forward model. CaMB enforces monotonicity as a hard architectural inductive bias, constraining optimization to physically consistent mappings and enabling robust and stable operator estimation.
We further integrate CaMB with a plug-and-play diffusion framework, proposing \textbf{CaMB-Diff}. Within this framework, the diffusion model serves as a powerful geometric prior for structural and semantic recovery, while CaMB explicitly models and corrects radiometric distortions through a physically grounded forward operator. 
Extensive experiments on a variety of zero-shot UDRC tasks,
including low-light enhancement, low-field MRI enhancement, and HDR reconstruction, demonstrate that CaMB-Diff significantly outperforms state-of-the-art zero-shot baselines
in terms of both signal fidelity and physical consistency.
Moreover, we empirically validate the effectiveness of the proposed CaMB parameterization in accurately modeling the unknown forward operator.

\end{abstract}

\section{Introduction}
\label{sec:introduction}
\begin{figure}[t] 
    \centering
    \includegraphics[width=\linewidth]{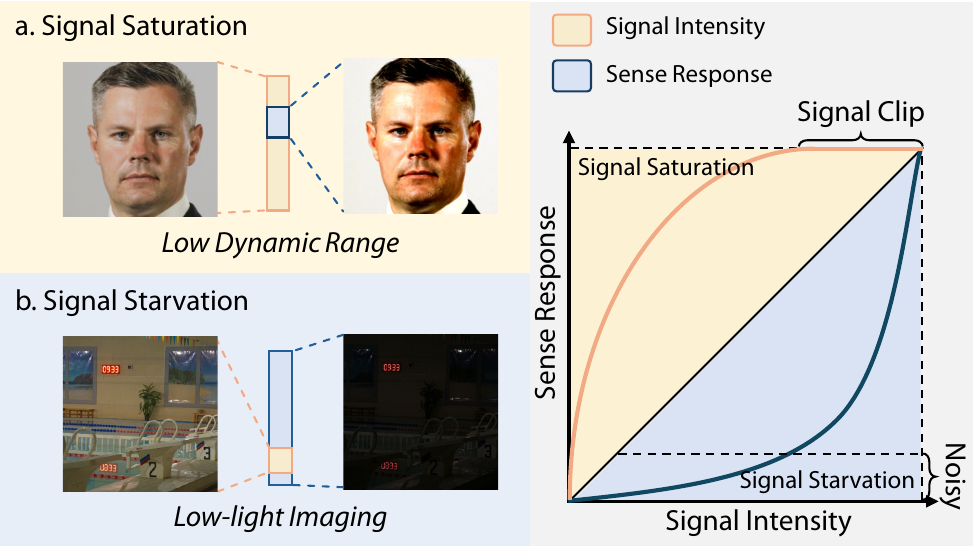} 
    \caption{Illustration of \textbf{unknown dynamic range compression} (UDRC), which arises from a mismatch between the signal span (yellow) and the sensor’s effective response region (blue).}
    \label{fig:teaser} 
\end{figure}

In digital imaging, limitations in sensing hardware and acquisition conditions hinder the faithful capture of the full radiometric dynamic range of the physical world. This pervasive issue, known as \textbf{unknown dynamic range compression} (UDRC), stems from a structural mismatch between the intrinsic signal span and the sensor’s effective response~\cite{debevec2023recovering, grossberg2003space, campbell2019opportunities}. 
This mismatch results in dual-mode information loss: signal saturation, where high-intensity details exceed sensor capacity (\eg clipping in HDR), and signal starvation, where weak signals are compressed into the noise floor (\eg photon scarcity in low-light imaging), as illustrated in Fig.~\ref{fig:teaser}~\cite{eilertsen2017hdr, chen2018learning}.

Resolving UDRC is mathematically formulated as a non-linear inverse problem with an unknown, form-agnostic forward operator. Existing zero-shot solvers struggle to navigate a fundamental bias-variance dilemma~\cite{fei2023generative, gou2023test}. On one end, simplified parametric models exhibit high bias. For instance, GDP~\cite{fei2023generative} reduces the operator to a linear affine transformation, which is insufficient to model complex radiometric curves and often suffers from convergence instability. On the other end, unconstrained deep neural adapters exhibit high variance. Methods like TAO~\cite{gou2023test} employ Convolution Neural Networks (CNNs) to approximate the operator. Despite adversarial constraints, these over-parameterized networks are prone to overfitting measurement rather than learning the authentic degradation physics. While task-specific heuristics exist—such as manual exposure priors in low-light enhancement~\cite{lin2025aglldiff, lv2024fourier} or contrast enhancement in low-field MRI~\cite{lin2024zero}—they often lack transferability across diverse DRC regimes or yield suboptimal solutions.

In recent years, diffusion probabilistic models (DPMs)~\cite{ho2020denoising, song2020denoising} have emerged as a powerful paradigm for solving ill-posed inverse problems~\cite{chung2022diffusion, mardani2023variational, zhang2025improving}. However, standard solvers typically rely on a known forward operator (\eg super-resolution) or a predefined analytical form (\eg blind deconvolution)~\cite{chung2023parallel,murata2023gibbsddrm}. 

In the context of UDRC, the distinct challenge lies in estimating the forward operator \textit{from scratch} concurrently with signal recovery. If the learned operator is structurally flawed (\eg high bias) or optimization-unstable (\eg high variance), it injects systematic errors into the measurement consistency guidance. This results in erroneous gradient fields that misdirect the reverse diffusion trajectory, causing the generative process to fail in recovering the true radiometric fidelity despite the strength of the prior.

To navigate this treacherous optimization landscape without paired supervision, we must ground the operator learning in a fundamental invariant that holds across domains. Despite the disparate physical origins of UDRC—ranging from Bloch relaxation dynamics in magnetic resonance imaging (MRI)~\cite{rooney2007magnetic, campbell2019opportunities} to non-linear photoelectric responses in computational photography~\cite{grossberg2003space}—these processes share a unifying physical law: the latent signal is projected through an unknown, yet \textit{monotonic} mapping. Crucially, while this mapping distorts the metric distance between intensities, it rigorously preserves their \textbf{topological ordinality}~\cite{sill1997monotonic, gupta2016monotonic}. This property provides a robust ``physical anchor" enabling us to constrain the search space of the forward operator to a physically meaningful manifold. By enforcing this hard constraint, we effectively eliminate the high-variance search directions that lead to the aforementioned optimization instability, ensuring that the learned operator respects the causality of signal intensity while maintaining generalization capabilities.

To mathematically instantiate this physical constraint, we propose a unified framework, \textbf{CaMB-Diff}, grounded in \textbf{structured forward operator modeling}. We move beyond generic neural approximations by embedding the monotonicity directly into the network topology via the proposed \textbf{cascaded monotonic Bernstein (CaMB)} operator. Unlike unconstrained networks, CaMB imposes a \textbf{hard architectural inductive bias} that strictly confines the hypothesis space to the manifold of physically consistent mappings \textit{by construction}~\cite{lorentz2012bernstein}. This design effectively eliminates the learning of spurious spatial correlations (\eg noise fitting) while retaining the high-order expressivity required to model intricate response curves. By integrating this differentiable operator into a pre-trained diffusion prior, our framework achieves \textbf{structural disentanglement}: the diffusion model recovers missing geometric details (\eg clipped highlights), while the CaMB operator explicitly calibrates global radiometric consistency.

Our contributions are summarized as follows:
\begin{enumerate}
    \item \textbf{Unified Perspective:} We propose a novel framework that unifies disparate signal degradation problems (Low-light, HDR, Low-field MRI) under the formalism of unknown dynamic range compression (UDRC). Crucially, we identify \textit{monotonicity} as the fundamental physical invariant that bridges these distinct imaging modalities.
    \item \textbf{Structured Operator Design:} We propose CaMB, which embeds physical monotonicity directly into architecture via cascaded Bernstein polynomials, resolving the bias-variance dilemma in operator estimation.
    \item \textbf{SOTA Performance:} Extensive experiments on Low-Light Enhancement, Low-Field MRI, and HDR Reconstruction demonstrate that CaMB-Diff achieves state-of-the-art performance among zero-shot solvers and effectively competes with task-specific specialists.
\end{enumerate}

\section{Background}
\label{sec: related_work}
\subsection{Solving Inverse Problems}
In digital imaging, the acquisition process is commonly modeled as:
\begin{equation}
    \boldsymbol{y} = \boldsymbol{A}_{\boldsymbol{\theta}}({\boldsymbol{x}}) + \boldsymbol{n},
\end{equation}
where $\boldsymbol{y}$ is the measurement,  $\boldsymbol{x}$ is the desired signal, $\boldsymbol{n}$ is the noise and $\boldsymbol{A}_{\boldsymbol{\theta}}$ denotes the forward operator parameterized by $\boldsymbol{\theta}$. 
In many classical inverse problems, the forward model admits an explicit or well-approximated analytic form, such as a convolution with a known blur kernel in non-blind image deblurring.
Thus, recovering $\boldsymbol{x}$ is commonly formulated as a regularized optimization problem: 
\begin{equation}
    \hat{\boldsymbol{x}} = \underset{\boldsymbol{x}}{\arg\min}\| \boldsymbol{A}_{\boldsymbol{\theta}}(\boldsymbol{x}) - \boldsymbol{y}\|_2^2 + \lambda \mathcal{R}(\boldsymbol{x}), 
\end{equation}
where the first term enforces data fidelity and $\mathcal{R}(\boldsymbol{x})$ reflects prior knowledge about the desired signal.
However, under UDRC, the acquisition process is fundamentally ambiguous and signal-dependent, causing an unknown analytic forward operator formulation.  Consequently, approximating or implicitly learning the forward operator is essential for faithful signal recovery.

\subsection{Operator Modeling for UDRC}
Existing methods for modeling the unknown forward operator $\boldsymbol{A}_{\boldsymbol{\theta}}$ in UDRC can be categorized into \textit{simplified parametric approximation} and \textit{neural operator modeling}.

For the parametric approximation, early methods typically adopt histogram equalization and gamma correction, relying on simple global statistics or rigid analytical forms. 
More recently, GDP ~\cite{fei2023generative} extend the operator as a spatially varying linear affine transformation
(\ie $\boldsymbol{y} = \alpha \boldsymbol{x} + \boldsymbol{\beta}$, where $\alpha \in \mathbb{R}, \boldsymbol{\beta} \in \mathbb{R}^d$).
However, these methods remain fundamentally constrained by simplified or linear model assumptions, limiting their ability to capture the complex non-linear sensor responses commonly observed in real-world imaging systems and often resulting in suboptimal restoration performance.

% Other works instead adopt neural networks to learn the forward operator in a data-driven manner using paired training data. For example, TAO~\cite{gou2023test} employs a CNN to approximate the forward model under adversarial guidance. 
% However, the learned operator often suffers from domain-shift issues across different imaging scenarios and is difficult to optimize in a zero-shot setting (\ie with only a single degraded observation). Moreover, the absence of explicit physical or structural constraints can lead to physically implausible solutions, such as intensity inversions.

To bypass the rigidity of parametric models, recent approaches utilize deep neural networks to approximate the forward operator. For instance, TAO~\cite{gou2023test} optimizes a CNN-based adapter during inference under adversarial guidance. However, optimizing such high-dimensional networks in zero-shot setting is highly ill-posed. Without explicit constraints, the optimization is prone to overfitting the measurement rather than learning the underlying physics, which introduces intensity inversions or structural artifacts.

\subsection{Diffusion Models for Blind Inverse Problems}

Diffusion models~\cite{ho2020denoising, song2020score} with their strong capacity to model complex data distributions have demonstrated remarkable effectiveness in solving inverse problems~\cite{chung2022diffusion, zhu2023denoising, du2024dper}. Most existing diffusion-based inverse problem solvers assume a known forward degradation model with a predefined analytical form. Even in blind inverse problem settings~\cite{chung2023parallel, murata2023gibbsddrm}, the blindness typically refers to unknown parameters within fixed analytical framework(\ie unknown $\boldsymbol{\theta}$ in $\boldsymbol{A}_{\boldsymbol{\theta}}$). 
% They further model the probabilistic distribution of the unknown parameters to guide posterior sampling to improve performance.
Crucially, resolving these problems relies heavily on imposing strong priors on the unknown parameters (\eg kernel sparsity or smoothness) to constrain the solution space and mitigate the ill-posedness of the joint estimation.

% In contrast, the problems are more challenging when the forward operator is unknown or lacks an explicit analytical form (\eg UDRC). 
% Existing approaches have not yet systematically explored how to perform inverse problem solving under such form-agnostic degradation settings.
% In this case, inaccurate or biased operator estimation can provide erroneous guidance to the diffusion process, which severely diminishes the effectiveness of the generative prior.
In contrast, UDRC presents a more severe challenge of \textit{functional agnosticism}, where the forward operator lacks a known explicit form. This absence of structural knowledge significantly exacerbates the ill-posed nature of the problem. Without a physical anchor, attempting to approximate the operator necessitates a perilous compromise: employing simplified parametric models introduces high bias, while utilizing unconstrained neural networks incurs high variance. In this regime, the resulting inaccurate or unstable operator estimation injects erroneous guidance gradients into the diffusion process, which easily override the generative prior, leading to convergence failures or hallucinated artifacts that violate physical consistency.

\section{Preliminaries}
\label{sec:prelimiaries}
%3.1 Problem Formulation: 物理现象 → 数学抽象 → 核心难点 → 求解目标 → 建模原则

\begin{figure*}[t]
    \centering
    \includegraphics[width=1.0\linewidth]{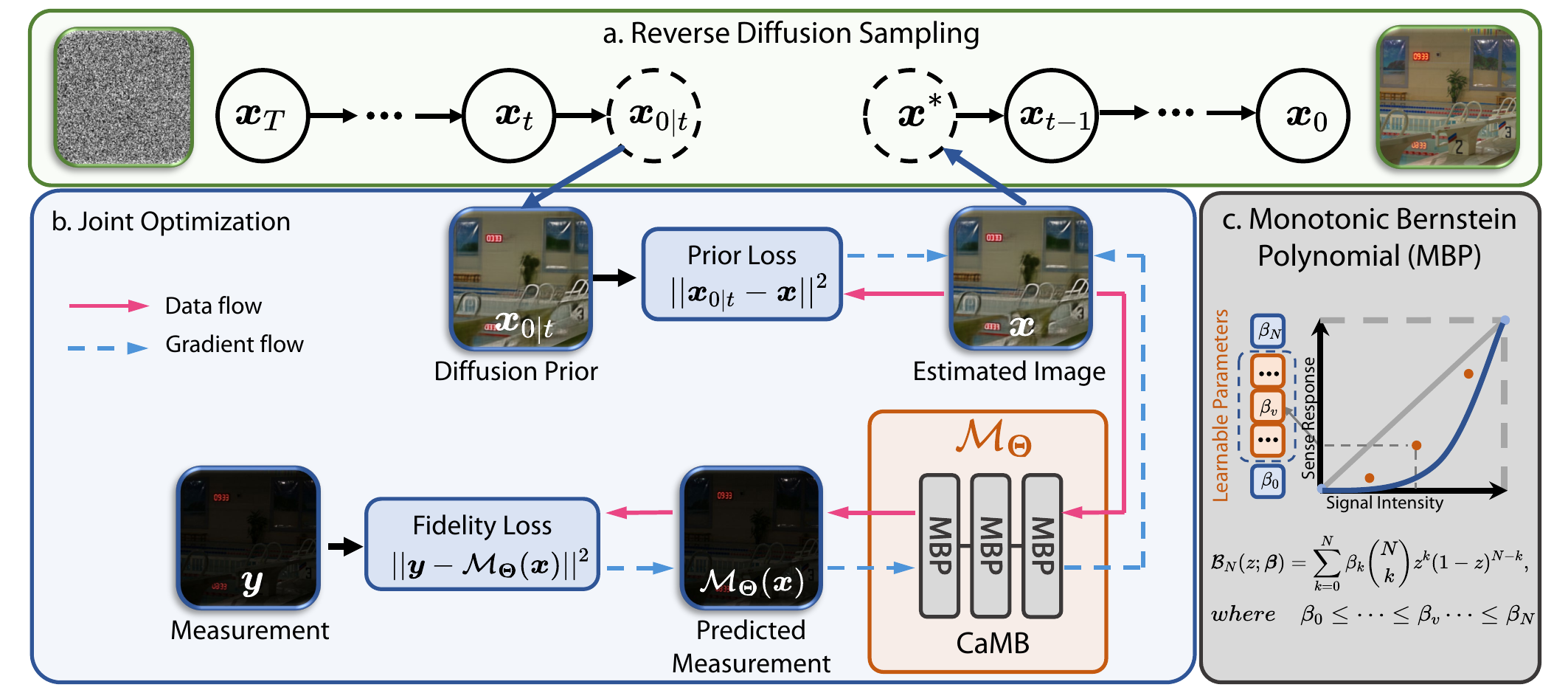} 
    \caption{\textbf{Overview of the CaMB-Diff}:  We solve unknown dynamic range compression (UDRC) via joint zero-shot optimization within the reverse diffusion sampling process.
    \textbf{a.} The diffusion model progressively refines the latent image through reverse-time sampling.
    \textbf{b.} At each sampling step, the estimated image is passed through the CaMB-parameterized forward model $\mathcal{M}_{\boldsymbol{\Theta}}$ to produce a predicted measurement, which is matched to the observed measurement via a data-fidelity loss, while the image is simultaneously regularized by the diffusion prior through a prior loss.
    \textbf{c.} CaMB is constructed by cascading monotonic Bernstein polynomials, enforcing physical monotonicity by design and enabling robust estimation of the unknown radiometric mapping.}
    \label{fig:pipeline}
\end{figure*}

\subsection{Forward Model of UDRC}
In unknown dynamic range compression (UDRC) tasks, a complex and unknown forward operator $\mathcal{M}:\mathbb{R}^d\to\mathbb{R}^d$ maps an ideal signal $\boldsymbol{x}\in\mathbb{R}^d$ to a noisy observation $\boldsymbol{y}\in\mathbb{R}^d$, expressed as: 
\begin{equation}
    \boldsymbol{y} = \mathcal{M}(\boldsymbol{x}) + \boldsymbol{n}, \quad \boldsymbol{n} \sim \mathcal{N}(\boldsymbol{0}, \sigma^2 \boldsymbol{I}).
\end{equation}
Learning an accurate $\mathcal{M}$ from the full function space $\mathbb{R}^d\to\mathbb{R}^d$ from a single observation is mathematically ill-posed.
Therefore, rather than solving $\mathcal{M}$ in an unconstrained manner, we restrict the solution space by analyzing and exploiting the physical characteristics of UDRC based on radiometric response functions.

\subsection{Analysis Characteristics of UDRC}
Following prior analysis of camera response functions~\cite{grossberg2003space}, we observe that UDRC differs fundamentally from spatial degradations (\eg blur), as it does not mix geometric information but instead operates on signal intensities.
\begin{assumption}
$\mathcal{M}$ is \textit{spatially stationary and univariate}, \ie the response at each location depends only on the local signal intensity and is independent of spatial position.
\label{assumption:spatially}
\end{assumption}

\begin{corollary}
If $\mathcal{M}$ is spatially stationary and univariate, $\mathcal{M}$ can be expressed as a point-wise scalar function applied element-wise: 
\begin{equation}
    [\mathcal{M}(\boldsymbol{x})]_i = m(x_i),
\end{equation}
where $m:\mathbb{R}\to\mathbb{R}$ is a scalar mapping.
\label{cor:1}
\end{corollary}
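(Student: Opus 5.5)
The argument amounts to formalizing Assumption~\ref{assumption:spatially} and reading off the conclusion. I will first make the two informal clauses precise. \emph{Univariate} (locality) will mean that for each index $i$ the output coordinate $[\mathcal{M}(\boldsymbol{x})]_i$ depends on $\boldsymbol{x}$ only through $x_i$. Concretely, whenever $x_i = x'_i$ we have $[\mathcal{M}(\boldsymbol{x})]_i = [\mathcal{M}(\boldsymbol{x}')]_i$, regardless of the other coordinates. \emph{Spatially stationary} will mean that the rule producing the response does not depend on the position $i$. Concretely, for any indices $i,j$ and inputs with $x_i = x'_j$, we have $[\mathcal{M}(\boldsymbol{x})]_i = [\mathcal{M}(\boldsymbol{x}')]_j$.

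\textbf{Step 1 (locality gives per-pixel scalar maps).} For each $i$, define $m_i:\mathbb{R}\to\mathbb{R}$ by $m_i(t) = [\mathcal{M}(t\boldsymbol{e}_i)]_i$, where $\boldsymbol{e}_i$ is the $i$-th standard basis vector. The filler value $0$ in the other coordinates is an arbitrary choice. For an arbitrary $\boldsymbol{x}$, the vectors $\boldsymbol{x}$ and $x_i\boldsymbol{e}_i$ share their $i$-th coordinate. Locality then gives $[\mathcal{M}(\boldsymbol{x})]_i = m_i(x_i)$. This shows that $m_i$ is well defined, i.e.\ independent of the filler choice.

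\textbf{Step 2 (stationarity collapses the family).} Apply stationarity with $\boldsymbol{x} = t\boldsymbol{e}_i$ and $\boldsymbol{x}' = t\boldsymbol{e}_j$. This yields $m_i(t) = m_j(t)$ for all $t$ and all $i,j$. Set $m := m_1$. Combining this with Step 1 gives $[\mathcal{M}(\boldsymbol{x})]_i = m(x_i)$ for every $i$ and every $\boldsymbol{x}$, which is the claimed element-wise form.

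The only real obstacle is the first paragraph. The assumption as stated is informal, so I must choose formalizations strong enough to yield the claim while still faithful to "the response at each location depends only on the local signal intensity and is independent of spatial position." Once the assumption is phrased as the two coordinate-wise conditions above, the remaining steps are immediate substitutions. I will also remark that, absent stationarity, one still obtains the weaker per-pixel form $m_i(x_i)$.
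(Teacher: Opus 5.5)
Your proposal is correct and takes essentially the paper's approach: the paper gives no separate proof and treats the corollary as a direct restatement of Assumption~\ref{assumption:spatially}, and your two formalized conditions make that step explicit. As a small simplification, your stationarity condition with $j=i$ already implies your locality condition, so setting $m(t)=[\mathcal{M}(t\boldsymbol{e}_1)]_1$ and applying stationarity once gives $[\mathcal{M}(\boldsymbol{x})]_i=m(x_i)$ directly.
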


\begin{assumption}
The mapping $m(\cdot)$ is \textit{monotonic}, \ie for any two signals $u, v\in \mathbb{R}$, if $u \leq v$, then $m(u)\leq m(v)$. 
\label{assumption:monotonic}
\end{assumption}

\subsection{Hypothesis Space of UDRC Forward Model}
Based on two physically motivated assumptions, we can restrict the search space of the forward operator to a physically consistent hypothesis space, formalized as: 
\begin{equation}
\begin{aligned}
\mathcal{H}_{\text{mono}} \triangleq 
\{
\mathcal{M} \;|\;
&[\mathcal{M}(\boldsymbol{x})]_i = m(x_i), m \in C^0(\mathbb{R}), \\
& m \text{ is monotonic increasing}
\}
\end{aligned}
\label{eq:hypothesis_space}
\end{equation}
By constraining the operator in $\mathcal{H}_\text{mono}$, the blind estimation of $\mathcal{M}$ is transformed into a structured manifold optimization, ensuring that any learned solution respects the physical causality of signal intensities.

\section{Proposed Method}
\label{sec:methodology}

Our goal is to restore a high-quality signal from a single noisy compressed measurement (\ie zero-shot setting).
To achieve this, we propose using \textbf{cascaded monotonic Bernstein} (CaMB) polynomial to parameterize the unknown forward operator $\mathcal{M}$.
Leveraging the physically constrained CaMB operator with the generative prior of diffusion models, we jointly optimize the recovered image and the parameterized forward model.
Fig.~\ref{fig:pipeline} illustrates the workflow. 

\subsection{Parameterizing Forward Model via CaMB}
\label{sec:camb_operator}
We adopt the distinct property of the Bernstein polynomial to construct a monotonic function. In addition to enhancing expressive power while maintaining efficient optimization, we introduce a cascaded design that composes multiple low-degree Bernstein polynomials to parameterize the hypothesis space $\mathcal{H}_\text{mono}$ defined in Eq.~\ref{eq:hypothesis_space}.

% To parameterize the hypothesis space $\mathcal{H}_{\text{mono}}$ defined in Eq.~\ref{eq:hypothesis_space}, we propose the \textbf{cascaded monotonic Bernstein} (CaMB) operator, which provides an expressive parameterization for continuous monotonic functions.

\paragraph{Monotonic Bernstein Polynomial.}
A Bernstein polynomial of degree $N$ maps an input $z \in [0, 1]$ as:
\begin{equation}
    \mathcal{B}_N(z; \boldsymbol{\beta}) = \sum_{k=0}^N \beta_k \binom{N}{k} z^k (1-z)^{N-k}.
    \label{eq:bernstein_poly}
\end{equation}
By constraining the a non-decreasing order on the coefficients, \ie $\beta_0 \leq \dots \leq \beta_N$, the polynomial is guaranteed to be monotonic on $[0, 1]$, since \ie $\frac{d}{dz}\mathcal{B}_N \ge 0 $.
As a result, the functional monotonicity requirement can be reduced to a simple algebraic ordering constraint on the parameters.

\paragraph{Ensuring Monotonicity via Explicit Constraints.}
To obtain the required non-decreasing coefficient sequence $\boldsymbol{\beta}\triangleq\{\beta_i\}_{i=0}^{N}$, we adopt a reparameterization strategy based on the cumulative sum of a learnable Softmax distribution, which enforces the monotonicity of $\boldsymbol{\beta}$ by construction. 
Specifically, we introduce unconstrained parameters $\boldsymbol{w} \in \mathbb{R}^{N}$ and map them through a Softmax function to obtain $\boldsymbol{p} = \texttt{Softmax}(\boldsymbol{w})$. The Bernstein coefficients are then defined as:
\begin{equation}
    \beta_0 = 0, \quad 
    \beta_k = \sum_{j=1}^{k} p_j, \quad k = 1, \dots, N.
\end{equation}
Because $p_j > 0$, this construction guarantees coefficients $\boldsymbol{\beta}$ a non-decreasing. 
As a result, the resulting Bernstein polynomial is guaranteed to be monotonic and properly normalized on $[0, 1]$ without any auxiliary regularization.

\paragraph{Wider or Deeper: The Case for Cascaded Designs.}
Although a single MBP is theoretically capable of approximating any function in $\mathcal{H}_{\text{mono}}$ when $N \rightarrow \infty$, increasing polynomial degree alone is impractical. 
Since Bernstein converges slowly, meaning that very high degrees are required to capture sharp radiometric transitions~\citep{lorentz2012bernstein}.
In addition, high-degree polynomials rely on high-dimensional Softmax parameterizations, which suffer from vanishing gradients and inefficient optimization.

To improve expressivity while retaining stable and efficient optimization, we adopt a compositional design that cascades multiple low-degree MBPs instead of increasing polynomial degree. The resulting \textbf{cascaded monotonic Bernstein} (CaMB) operator composes several lightweight MBPs:
\begin{equation}
\mathcal{M}_{\text{CaMB}}(z)
\triangleq \mathcal{B}_N^{(K)} \circ \mathcal{B}_N^{(K-1)} \circ \dots \circ \mathcal{B}_N^{(1)}(z).
\end{equation}
This cascaded structure significantly enhances representational capacity through depth, while preserving monotonicity and favorable optimization behavior. We further show that CaMB can approximate any continuous monotonic function using a compact parameterization (see \textit{proof} in Appendix~\ref{app:depp_universary}), making it a principled and effective choice for approximating the unknown forward operator in UDRC.

% where each layer uses a modest degree. This cascaded structure enhances representational capacity while preserving monotonicity and stable optimization, yielding a more efficient alternative to a single high-degree polynomial. We prove that CaMB can approximate any continuous function with a compact parameterization, see Appendix~\ref{app:depp_universary} for details. 

\subsection{Joint Optimizing Parametric Model and Signal}
\label{sec:optimization}
To solve the zero-shot UDRC problem, we adopt a plug-and-play (PnP) diffusion framework to jointly optimize desired signals and the parametric forward model. 
We formulate the recovery as the following joint optimization problem:
\begin{equation}
    \hat{\boldsymbol{x}}, \hat{\boldsymbol{\Theta}} = \underset{\boldsymbol{x}, \boldsymbol{\Theta}}{\arg\min} \|\boldsymbol{y} - \mathcal{M}_{\boldsymbol{\Theta}} (\boldsymbol{x})\|_2^2 - \lambda \cdot\log p(\boldsymbol{x}),
\end{equation}
where $\mathcal{M}_{\boldsymbol{\Theta}}$ is the CaMB-parameterized UDRC forward model,
$\boldsymbol{\Theta}$ represents its parameters, 
and $p(\boldsymbol{x})$ is the data prior
provided by pre-trained diffusion models.

\paragraph{HQS Decomposition.}
Within PnP-Diffusion framework, we employ the half-quadratic splitting (HQS) scheme, introducing auxiliary variable $\boldsymbol{z}$ to decouple the data fidelity term from the prior term.
The resulting optimization is solved by alternating between the following two subproblems:
\begin{equation}
\label{eq:hqs_subproblems}
\left\{
\begin{aligned}
& \boldsymbol{z}^{i}, \boldsymbol{\Theta}^{i}
= \underset{{\boldsymbol{z}, \boldsymbol{\Theta}}}{\arg\min}
\|\boldsymbol{y} - \mathcal{M}_{\boldsymbol{\Theta}}(\boldsymbol{z})\|_2^2
+ \frac{1}{\eta^2}\|\boldsymbol{z} - \boldsymbol{x}^{i-1}\|_2^2, \\
& \boldsymbol{x}^{i}
= \underset{\boldsymbol{x}}{\arg\min} \frac{1}{2\eta^2}\|\boldsymbol{x} - \boldsymbol{z}^{i}\|_2^2
- \lambda \log p(\boldsymbol{x}).
\end{aligned}
\right.
\end{equation}
where $\eta$ controls the coupling strength between $\boldsymbol{x}$ and $\boldsymbol{z}$.

\paragraph{Solving the Subproblems.}
The data-fidelity subproblem is solved via gradient-based optimization, since the CaMB operator $\mathcal{M}_{\boldsymbol{\Theta}}$ is fully differentiable with respect to both $\boldsymbol{z}$ and $\boldsymbol{\Theta}$.
The prior subproblem can be solved by directly applying the pre-trained diffusion model as a denoiser:
\begin{equation}
\boldsymbol{x}^{i}
= \mathcal{D}_{\boldsymbol{\theta}}(\boldsymbol{x}_t, t),
\quad
\boldsymbol{x}_t = \boldsymbol{z}^{i} + \sqrt{1-\bar{\alpha}_t}\,\boldsymbol{\epsilon},
\end{equation}
where $\mathcal{D}_{\boldsymbol{\theta}}(\cdot, t)$ denotes the diffusion denoiser at
timestep $t$, $\bar{\alpha}_t$ is the cumulative noise schedule, and
$\boldsymbol{\epsilon}\sim\mathcal{N}(\boldsymbol{0}, \boldsymbol{I})$.
The optimization process is illustrated in Fig.~\ref{fig:pipeline} and Algorithm~\ref{alg:camb}.
After optimization, high-quality images are recovered under CaMB’s physical constraints, with diffusion providing a strong geometric prior for structural and semantic recovery.

\section{Experiments \& Results}
\label{sec:experiments}
\begin{table*}[t]
\centering
\caption{\textbf{Quantitative comparison of low-light image enhancement on the LOLv1, LOLv2-real, and LOLv2-synthetic benchmarks.} The method types ``S'' and ``Z'' denote Supervised and Zero-shot methods, respectively. For clarity, the best overall results are \underline{underlined}, while the best zero-shot performance is highlighted in \textbf{bold}.}
\setlength{\tabcolsep}{3pt} % 稍微收窄列间距以适应页面
\label{tab:low-light-table}
\resizebox{\linewidth}{!}{
\begin{tabular}{clccccccccccccccc} % 使用 | 添加竖线
\toprule
\multirow{2.5}{*}{\textbf{Type}} & \multirow{2.5}{*}{\textbf{Methods}} & \multicolumn{5}{c}{\textbf{LOL v1~\cite{wei2018deep}}} & \multicolumn{5}{c}{\textbf{LOL v2 Real~\cite{yang2021sparse}}} & \multicolumn{5}{c}{\textbf{LOL v2 Synthetic~\cite{yang2021sparse}}} \\
\cmidrule(lr){3-7} \cmidrule(lr){8-12} \cmidrule(lr){13-17}
& & PSNR$\uparrow$ & SSIM$\uparrow$ & LPIPS$\downarrow$ & FID$\downarrow$ & LOE$\downarrow$ & 
PSNR$\uparrow$ & SSIM$\uparrow$ & LPIPS$\downarrow$ & FID$\downarrow$ & LOE$\downarrow$ & 
PSNR$\uparrow$ & SSIM$\uparrow$ & LPIPS$\downarrow$ & FID$\downarrow$ & LOE$\downarrow$ \\
\midrule

% --- Supervised Learning 区域 ---
\multirow{5}{*}{S} 
& RetinexNet~\cite{wei2018deep}    & 18.21 & 0.7962 & 0.3195 & 130.13 & 445.75 & 
                 18.12 & 0.8052 & 0.3438 & 142.89 & 595.90 &
                 17.59 & 0.8123 & 0.2399 & 86.30 & 534.90\\
& KinD~\cite{zhang2019kindling}     & 20.38 & 0.9076 & 0.1586 & 63.25 & 437.55 &
                 18.63 & 0.8821 & 0.1455 & 63.08 & 445.97 & 
                 17.65 & 0.8695 & 0.2599 & 85.56 & 436.02\\
& Restormer~\cite{zamir2022restormer}    & 22.92 & 0.9314 & 0.1655 & 76.44 & 134.35 &
                 19.04 & 0.8891 & 0.2013 & 72.50 & 233.57 & 
                 21.92 & 0.9183 & 0.1579 & 45.51 & 183.17\\
& SNR-Net~\cite{xu2022snr}      & 25.71 & 0.9426 & \underline{0.1422} & \underline{54.83} & 188.15 &
                 22.11 & 0.9251 & \underline{0.1434} & \underline{52.06} & 206.59 & 
                 24.48 & 0.9528 & \underline{0.0651} & \underline{18.61} & 204.31\\                
& RetinexFormer~\cite{cai2023retinexformer} & \underline{26.20} & \underline{0.9498} & 0.1554 & 65.46 & 150.06 &
                \underline{23.32} & \underline{0.9234} & 0.1656 & 58.00 & 199.79 &
                \underline{26.18} & \underline{0.9607} & 0.0744 & 20.71 & 186.08\\
\midrule

% --- Zero-shot Learning 区域 ---
\multirow{7}{*}{Z}
& Zero-DCE~\cite{guo2020zero}    & 16.91 & 0.7600 & 0.2712 & 101.11  & 245.54 & 
                18.06 & 0.7027 & 0.2795 & 80.37 & 201.99 &
                17.36 & 0.8128 & 0.2782 & 89.39 & 218.86\\
% & Zero-DCE++  & 14.86 & 0.6702 & 0.3448 & 86.22  & 302.06 & 
%                 17.45 & 0.6216 & 0.3341 & 106.18 & 259.83 &
%                 15.29 & 0.7474 & 0.3307 & 58.06 & 554.05\\
& RUAS~\cite{liu2021retinex}        & 16.79 & 0.7771 & 0.2612 & 103.87 & 153.18 &
                15.67 & 0.7504 & 0.2849 & 88.05 & 174.62 & 
                16.65 & 0.7201 & 0.3742 & 112.76 & 491.27\\    
& GDP~\cite{fei2023generative}         & 17.06 & 0.7145 & 0.3581 & 126.39 & 190.22 & 
                15.05 & 0.6265 & 0.3965 & 118.95 & 278.67 &
                12.13 & 0.4934 & 0.2873 & 94.24 & 75.99\\
& TAO~\cite{gou2023test}         & 17.18 & 0.8099 & 0.4153 & 127.32 & 331.13 & 
                18.98 & 0.8019 & 0.3482 & 125.88 & 440.91&
                13.30 & 0.6135 & 0.4783 & 120.57 & 288.14\\
& FourierDiff~\cite{lv2024fourier} & 17.85 & 0.8340 & 0.2575 & 79.35  & 166.71  & 
                16.23 & 0.7970 & 0.2648 & 76.31 & 121.82 &
                14.13 & 0.6682 & 0.2590 & 77.19 & 130.92 \\
& AGLLDiff~\cite{lin2025aglldiff}    & 19.57 & \textbf{0.8699} & 0.2701 & 123.90 & 225.22 &
                19.46 & \textbf{0.8543} & 0.2700 & 120.15 & 265.29 &
                % \textbf{17.93} & 0.8245 & 0.3344 & 136.32 & 299.93 \\
                \textbf{18.81} & \textbf{0.8735} & 0.2572 & 93.73 & 299.93 \\
& CaMB-Diff~(Ours)   & \textbf{20.14} & 0.8498 & \textbf{0.2171} & \textbf{72.65} & \underline{\textbf{82.21}} & 
                \textbf{19.58} & 0.8359 & \textbf{0.2177} & \textbf{66.90} & \underline{\textbf{88.27}} &
                18.35 & 0.8728 & \textbf{0.1692} & \textbf{48.95} & \underline{\textbf{38.85}} \\
\bottomrule
\end{tabular}
}
\end{table*}

\begin{figure*}[t]
    \centering
    \includegraphics[width=0.97\linewidth]{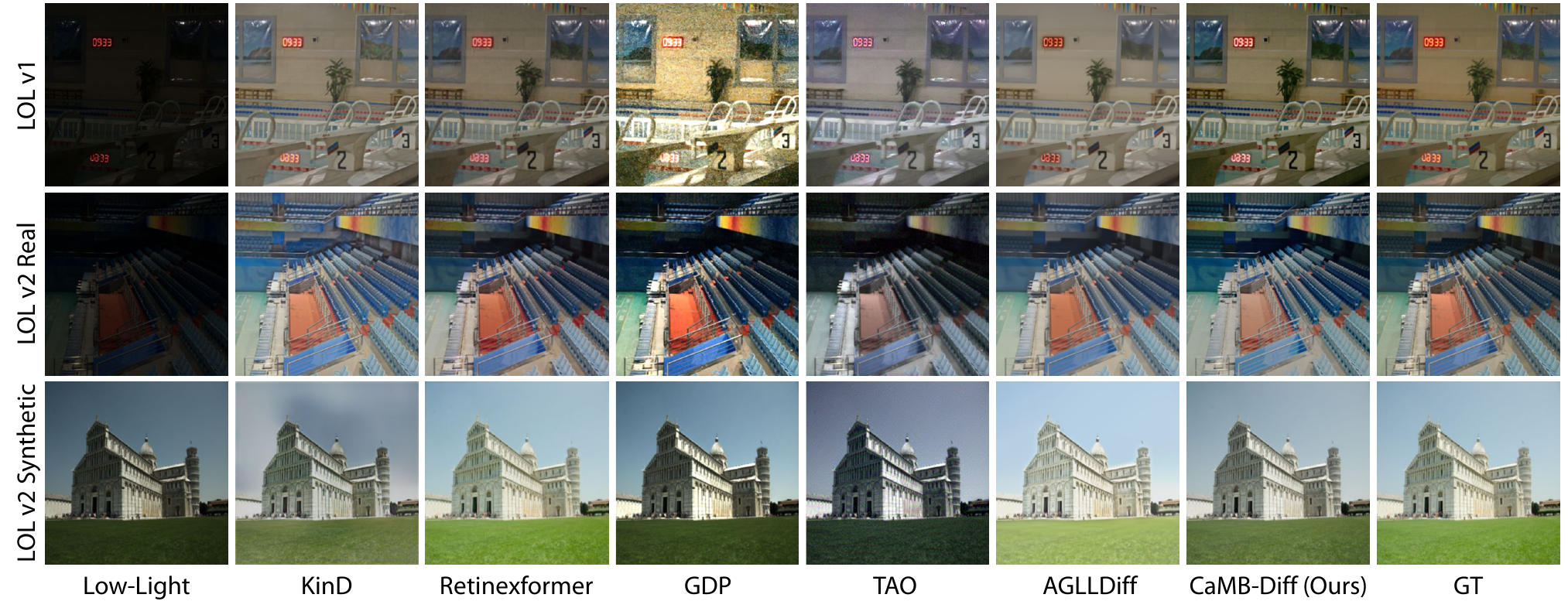} 
    \caption{\textbf{Qualitative comparison of low-light image enhancement on the LOLv1, LOLv2-real, and LOLv2-synthetic benchmarks.}}
    \label{fig:main-result-low-light}
\end{figure*}

\begin{algorithm}[t]
\caption{CaMB-Diff}
\label{alg:camb}
\begin{algorithmic}[1]
    \INPUT Pre-trained diffusion model $\mathcal{D}_{\boldsymbol{\theta}}$, total iterations $I$, paramtric forward model $\mathcal{M}_{\boldsymbol{\Theta}}$, measurement $\boldsymbol{y}$, noise schedule $\{\sigma_t\}_{t=1}^{T}$, guidance scale $\lambda_t$, step size $\eta$.
    \STATE \textbf{Initialize:} $\boldsymbol{x}^{0}, \boldsymbol{z}^0 \sim \mathcal{N}(\boldsymbol{0}, \boldsymbol{I})$, $\boldsymbol{\Theta} = \boldsymbol{I}$.
    % \STATE \textbf{Initialize} CaMB parameters $\Theta$ (\eg identity mapping).
    \FOR{$i = 0$ \textbf{to} $I$} 
        \STATE \textcolor{gray}{// Set diffusion noise schedule}
        \STATE $t\leftarrow t_i$ (\eg linear spacing from $T$ to 1)
        \STATE \textcolor{gray}{\(\triangleright\) \textit{1. Data Fidelity Optimization }}
        \FOR {$j = 1$ \textbf{to} $J$}
            \STATE $\mathcal{L} = \|\boldsymbol{y} - \mathcal{M}_\Theta(\boldsymbol{z})\|^2 + \lambda_t \|\boldsymbol{z} - {\boldsymbol{x}^i}\|^2$,
            \STATE $\boldsymbol{z} = \boldsymbol{z} - \eta \nabla_{\boldsymbol{z}} \mathcal{L}$, \, $\boldsymbol{\Theta}  = \boldsymbol{\Theta} - \eta \nabla_{\boldsymbol{\Theta}} \mathcal{L}$
        \ENDFOR
        \STATE \textcolor{gray}{\(\triangleright\) \textit{2. Prior Optimization via Denoising}}
        
        \STATE $\boldsymbol{x}_{t_i} = \boldsymbol{z}_i + \sqrt{1-\bar{\alpha}_{t_i}}\,\boldsymbol{\epsilon}$, $\boldsymbol{\epsilon}\sim \mathcal{N}(0, \boldsymbol{I})$,

        \STATE $\boldsymbol{x}^{i} = \mathcal{D}_{\boldsymbol{\theta}}(\boldsymbol{x}_{t_i}, t_i)$,

        % \STATE $\hat{\boldsymbol{x}}_{0|t} = \frac{1}{\sqrt{\bar{\alpha}_t}}(\boldsymbol{x}_t - \sqrt{1 - \bar{\alpha}_t}\boldsymbol{\epsilon}_\theta(\boldsymbol{x}_t, t))$ 
        
        % \STATE \textcolor{gray}{// Step 2: Radiometric Consistency (Likelihood Step)}
        % \STATE Initialize $\boldsymbol{z} \leftarrow \hat{\boldsymbol{x}}_{0|t}$
        % \FOR{$k = 1$ \textbf{to} $K$} 
        %     \STATE $\mathcal{L} = \|\boldsymbol{y} - \mathcal{M}_\Theta(\boldsymbol{z})\|^2 + \lambda_t \|\boldsymbol{z} - \hat{\boldsymbol{x}}_{0|t}\|^2$
        %     \STATE $\boldsymbol{z} \leftarrow \boldsymbol{z} - \eta \nabla_{\boldsymbol{z}} \mathcal{L}$
        %     \STATE $\boldsymbol{\Theta} \leftarrow \Theta - \eta \nabla_{\Theta} \mathcal{L}$
        % \ENDFOR
        % \STATE Let $\boldsymbol{z}^* = \boldsymbol{z}$ denote the corrected estimate.

        % \STATE \textcolor{gray}{// Step 3: Reverse Diffusion Step}
        % \STATE $\hat{\boldsymbol{\epsilon}}_t = \frac{1}{\sqrt{1-\bar{\alpha}_t}}(\boldsymbol{x}_t - \sqrt{\bar{\alpha}_t}\boldsymbol{z}^*)$ 
        % \STATE $\boldsymbol{\epsilon} \sim \mathcal{N}(\boldsymbol{0}, \boldsymbol{I})$
        % \STATE $\boldsymbol{x}_{t-1} = \sqrt{\bar{\alpha}_{t-1}}\boldsymbol{z}^* + \sqrt{1-\bar{\alpha}_{t-1}}(\sqrt{1-\zeta}\hat{\boldsymbol{\epsilon}}_t + \sqrt{\zeta}\boldsymbol{\epsilon})$
    \ENDFOR
    \STATE \textbf{return} $\boldsymbol{z}^{I}$
\end{algorithmic}
\end{algorithm}

\subsection{Experimental Setup}
To comprehensively evaluate CaMB-Diff, we consider three tasks spanning two regimes of signal intensity degradation:
\textbf{signal starvation} (low-light image and low-field MRI enhancement) and
\textbf{signal saturation} (HDR reconstruction).

\textbf{Datasets.}  
In \textbf{low-light enhancement} task, we employed the LOLv1~\cite{wei2018deep} and LOLv2~\cite{yang2021sparse} benchmarks. The LOLv2 dataset is explicitly composed of real and synthetic subsets.
In \textbf{low-field MRI enhancement} task, we utilized the 3T high-field HCP dataset~\cite{van2013wu}, consisting of skull-stripped volumes at 1mm isotropic resolution. The dataset was partitioned into 70\%/15\%/15\% for training, validation, and testing, where the training split was used exclusively to learn the clean high-field diffusion prior. Realistic low-field observations were synthesized from the high-field test split following the physics-driven model in ~\cite{lin2023low}, simulating field-dependent SNR decay and contrast shifts calibrated to match real-world intensity statistics.
In \textbf{HDR Reconstruction}, we evaluated on 100 images of the ImageNet validation set~\cite{deng2009imagenet}. To simulate sensor saturation, we applied a hard clipping operation on the normalized inputs $\mathbf{x} \in [-1, 1]$ via $\mathbf{y} = \text{clip}(2\mathbf{x}, -1, 1)$~\cite{zhang2025improving}. Additionally, Gaussian noise with $\sigma=0.05$ was added to model measurement uncertainty. 

\textbf{Baselines.} 
We compare CaMB-Diff against two categories of methods: (1) \textbf{Universal Zero-Shot Solvers}, benchmarking against GDP~\cite{fei2023generative} and TAO~\cite{gou2023test} across all tasks to isolate the impact of operator modeling; and (2) \textbf{Task-Specific Specialists}, covering SOTA in each domain.  For low-light image enhancement, we compare with supervised methods: RetinexNet~\cite{wei2018deep}, KinD\cite{zhang2019kindling}, Restormer~\cite{zamir2022restormer}, SNR-Net~\cite{xu2022snr}, RetinexFormer~\cite{cai2023retinexformer}
and unsupervised/zero-shot methods: 
Zero-DCE~\cite{guo2020zero}, RUAS~\cite{liu2021retinex}, FourierDiff~\cite{lv2024fourier}, AGLLDiff~\cite{lin2025aglldiff}). 
For low-field MRI enhancement, we evaluate against supervised baseline PF-SR~\cite{man2023deep}, unsupervised baseline ULFInr~\cite{islam2025ultra},  and zero-shot baseline DiffDeuR~\cite{lin2024zero}. 
% For HDR Reconstruction, we benchmark against diffusion-based inverse solvers including DPS~\cite{chung2022diffusion}, RED-diff~\cite{mardani2023variational}, and DAPS~\cite{zhang2025improving}.

\textbf{Metrics.}
Reconstruction quality is assessed via PSNR and SSIM for signal fidelity, and LPIPS~\cite{zhang2018unreasonable}/FID~\cite{heusel2017gans} for perceptual realism. For low-light tasks, we additionally report the Lightness Order Error (LOE)~\cite{wang2013naturalness} to quantify the preservation of natural illumination order.

\textbf{Implementation Details.} 
We utilized unconditional ADM~\cite{dhariwal2021diffusion} pre-trained on ImageNet for natural image tasks (Low-Light/HDR) and trained a domain-specific DDPM on the HCP training split for MRI. Inference is accelerated via DDIM~\cite{song2020denoising} with 100 sampling steps for all tasks. In each reverse timestep, the joint optimization is solved using Adam with 20 internal iterations (\ie $J=20$) and $\eta = 0.01$.

\subsection{Main Results}

\begin{figure*}[t] 
    \centering
    \includegraphics[width=1\linewidth]{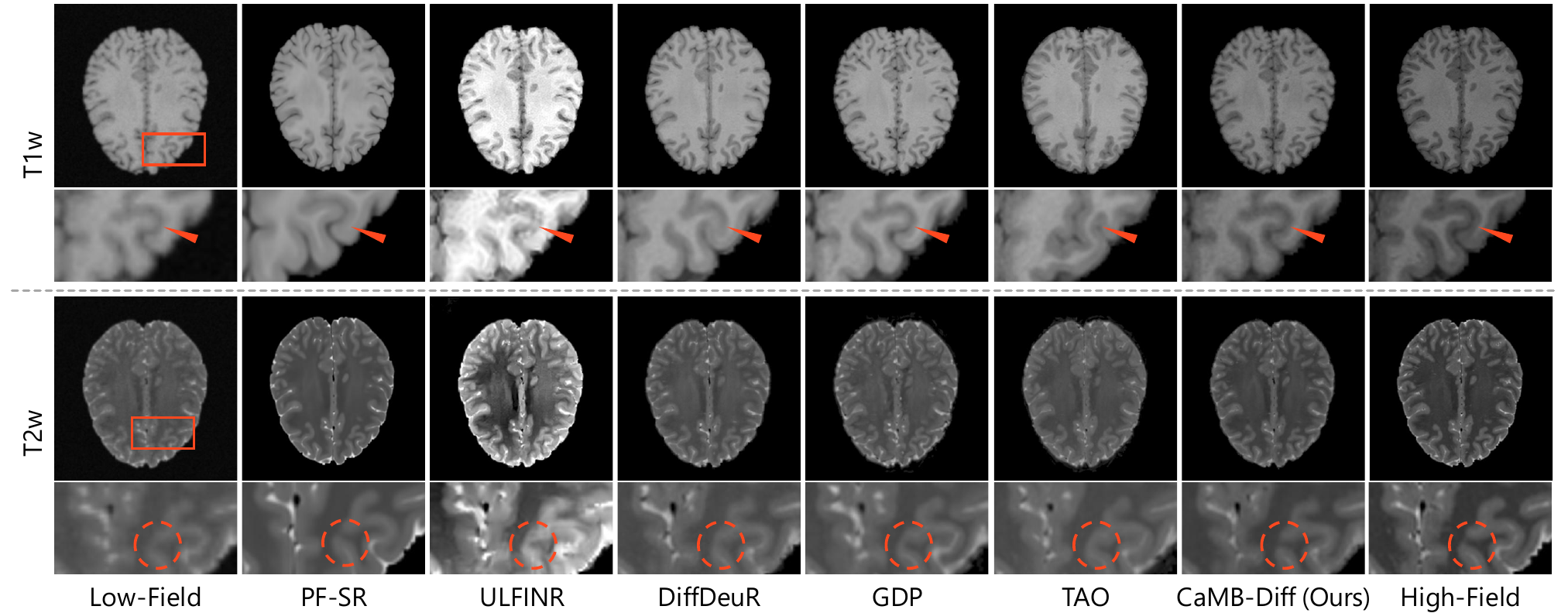} 
    \caption{\textbf{Qualitative comparison of low-field MRI enhancement on the HCP dataset.}}
    \label{fig:main-result-lf}
\end{figure*}

\textbf{Low-Light Enhancement.}
Quantitative comparisons in Table~\ref{tab:low-light-table} demonstrate that CaMB-Diff outperforms all zero-shot baselines in perceptual metrics(LPIPS/FID) while maintaining signal fidelity (PSNR/SSIM) competitive with sota zero-shot task-specific methods. Crucially, our method achieves the lowest Lightness Order Error (LOE) across all categories, empirically validating the necessity of monotonicity for preserving natural illumination. Qualitative comparison is shown in Fig.~\ref{fig:main-result-low-light}. While the high-bias parametric modeling of GDP introduces blocking artifacts and the unconstrained optimization of TAO leads to severe color shifts, CaMB-Diff distinctly surpasses these approaches, delivering aesthetically natural enhancements with optimal brightness, accurate color fidelity, and restored global contrast.

\begin{table}[t!]
\centering
\caption{\textbf{Quantitative comparison of low-field MRI enhancement on the HCP dataset.} The best  results are highlighted in \textbf{bold}.}
\label{tab:low-field-table}
\resizebox{\linewidth}{!}
{
\begin{tabular}{l ccc ccc}
\toprule
  \multirow{2}{*}{\textbf{Method}} & \multicolumn{3}{c}{\textbf{T1w}} & \multicolumn{3}{c}{\textbf{T2w}} \\
\cmidrule(lr){2-4} \cmidrule(lr){5-7}
 & PSNR$\uparrow$  & SSIM$\uparrow$  & LPIPS$\downarrow$  
 & PSNR$\uparrow$  & SSIM$\uparrow$  & LPIPS$\downarrow$ \\
\midrule
 ULFInr~\cite{islam2025ultra}     & 22.06 & 0.7172 & 0.2194 
            & 20.26 & 0.5689 & 0.2699  \\
            
 PF-SR~\cite{man2023deep}      & 21.92 & 0.7052  & 0.2126  
            & 21.21  & 0.6542  & 0.2001  \\

 DiffDeuR~\cite{lin2024zero}   & 23.17 & 0.6840 & 0.2611 
            & 23.66 & 0.8087 & 0.2620 \\
            
 GDP~\cite{fei2023generative}        & 23.42 & 0.6789 & 0.2169 
            & 23.92 & 0.7657 & 0.2278 \\
            
 TAO~\cite{gou2023test}        & 23.61 & 0.6892 & 0.2156 
            & 24.18 & 0.7821 & 0.2094 \\
            
 CaMB-Diff~(Ours)  & \textbf{26.43} & \textbf{0.8020} & \textbf{0.1602} 
            & \textbf{25.39} & \textbf{0.8141} & \textbf{0.1404} \\
\bottomrule
\end{tabular}
}
\end{table}

\textbf{Low-Field MRI Enhancement.}
Quantitative assessments in Table~\ref{tab:low-field-table} establish CaMB-Diff’s superiority, outperforming all baselines across both fidelity (PSNR/SSIM) and perceptual (LPIPS/FID) metrics. Visual comparison in Fig.~\ref{fig:main-result-lf} further highlights this advantage: our method not only restores the high-contrast dynamics characteristic of high-field imaging but also faithfully resolves anatomical details obscured in the low-field input. 

\textbf{HDR Reconstruction.}
Quantitative comparisons in Table~\ref{tab:hdr} demonstrate that CaMB-Diff significantly outperforms all zero-shot baselines. Our method achieves substantial margins across both signal fidelity (PSNR/SSIM) and perceptual metrics (LPIPS/FID), surpassing the nearest competitor by over 3dB in PSNR. Visual comparisons in Fig.~\ref{fig:main-result-hdr} further illustrate that our method not only restores the global high dynamic range but also successfully hallucinates plausible high-frequency details in hard-clipped regions, effectively resolving the saturation ambiguity where other methods fail.

\subsection{Ablation Studies}
\label{sec:analysis}

% 结构化设计有效性(monotonic)

\paragraph{Effectiveness of Structured Forward Operator Modeling.}

\begin{table}[t!]
\centering
\caption{\textbf{Quantitative comparison of HDR Reconstruction
on the ImageNet dataset.} The best results are highlighted in \textbf{bold}.}
\label{tab:hdr}
\resizebox{\linewidth}{!}{
\begin{tabular}{lcccc}
\toprule
 \multirow{2.5}{*}{\textbf{Method}} & \multicolumn{4}{c}{\textbf{ImageNet}} \\
\cmidrule(lr){2-5} 
 & PSNR$\uparrow$ & SSIM$\uparrow$ & LPIPS$\downarrow$ & FID$\downarrow$ \\

% \midrule
% DPS~\cite{chung2022diffusion}               & 15.04 & 0.3180 & 0.6572 &  237.86 \\
% RED-diff~\cite{mardani2023variational}      & 22.15 & 0.8167 & 0.2421 &  65.44\\
% DAPS~\cite{zhang2025improving}              & 26.29 & 0.8845 &  0.1759 & 44.43  \\
\midrule
GDP~\cite{fei2023generative}                & 16.80 & 0.7003 &  0.3016 & 84.99  \\
TAO~\cite{gou2023test}                      & 16.77 & 0.6692 &  0.3478 & 101.28 \\
CaMB-Diff~(Ours)                            & \textbf{19.95} & \textbf{0.7909} &                                                    \textbf{0.2639} & \textbf{70.73} \\
\bottomrule
\end{tabular}
}
\end{table}

\begin{figure}[t] 
    \centering
    \includegraphics[width=\linewidth]{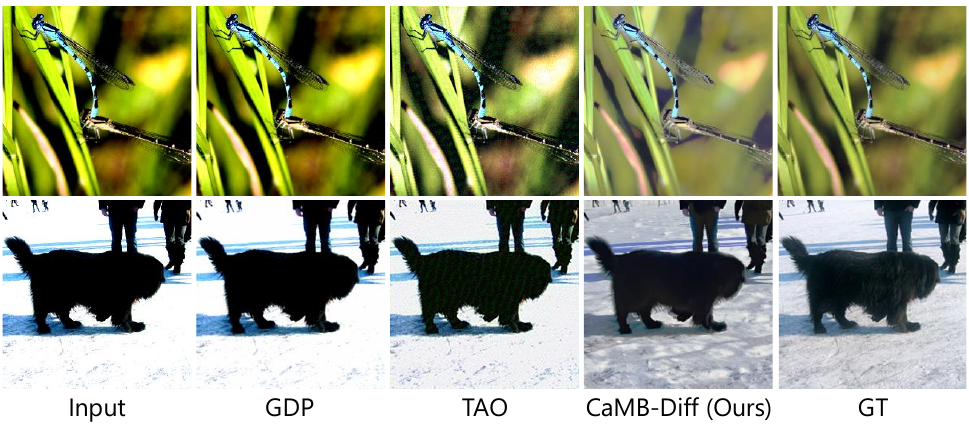} 
    \caption{\textbf{Qualitative comparison of HDR reconstruction on the ImageNet dataset.}}
    \label{fig:main-result-hdr}
\end{figure}

\begin{table}[t]
\label{tab:analysis-cascaded}
\centering
\caption{\textbf{Ablation study on architectural design and parameter efficiency.} We evaluate the impact of operator depth ($K$) versus width (degree $N$) on the LOLv1 dataset. Our cascaded design ($N=3, K=8$) achieves superior fidelity with minimal parameters, distinctly outperforming the unstable high-degree variant ($N=64$) and heavy baselines.}
\resizebox{\linewidth}{!}{
\begin{tabular}{lcccccc}
\toprule
\multirow{2}{*}{Forward Model} & \multirow{2}{*}{Num of Params} & \multicolumn{5}{c}{LOL v1}  \\
\cmidrule(lr){3-7} 
& & PSNR & SSIM & LPIPS & FID &  LOE\\
\midrule
CaMB(N=64 K=1) & 63 & 
                10.99 & 0.5165 & 0.2627 & 95.60 & \textbf{49.63} \\

CaMB(N=2 K=8)  & \textbf{8} & 
                19.94 & 0.8476 & 0.2311 & 87.91 & 82.41 \\

CaMB(N=3 K=8)  & 16 
                &\textbf{20.14} & \textbf{0.8498} & \textbf{0.2171} & \textbf{72.65} & 82.21 \\

% CaMB(N=4 D=8)& 24 
%                 &19.65 & 0.8267 & 0.2238 & 78.09 &  87.13 \\

GDP(fx+M) &    65537 
                & 17.06 & 0.7145 & 0.3581 &  126.39 &  190.22 \\

TAO(CNNs) &    742660 
                & 17.18 & 0.8099 & 0.4153 & 127.32 &  331.13\\
\bottomrule
\end{tabular}
}
\end{table}
\begin{figure}[t] 
    \centering
    \includegraphics[width=\linewidth]{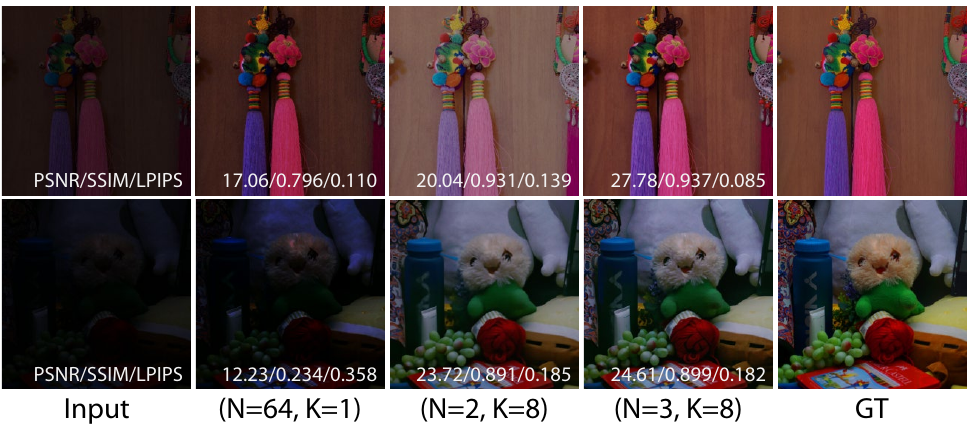} 
    \caption{\textbf{Qualitative and Quantitative comparisons of CaMB-Diff with various cascaded designs.} While the single high-degree model ($N=64, K=1$) fails to enhance the dark input, our cascaded low-degree design ($N=3, K=8$) effectively recovers the illumination and details.}
    \label{fig:analysis-cascaded}
\end{figure}

To strictly validate whether our method learns the authentic physical degradation rather than overfitting the measurement, we track the convergence dynamics of the forward operator estimation on the Low-Field MRI task. We monitor two complementary metrics in Fig.~\ref{fig:analysis-fidelity}: (1) \textbf{fidelity error} ($\|\boldsymbol{y} - \mathcal{M}_{\boldsymbol{\Theta}}(\boldsymbol{z})\|^2$), which guides the zero-shot optimization, and (2) \textbf{validation Error} ($\|\boldsymbol{y} - \mathcal{M}_{\boldsymbol{\Theta}}(\boldsymbol{x}_\text{GT})\|^2$), an oracle metric evaluating the error between the learned operator applied to the ground truth and the actual observation.

As observed in the left part of Fig.~\ref{fig:analysis-fidelity}, baseline methods including DiffDeuR, GDP, and TAO successfully minimize the fidelity error, rapidly converging to a low value similar to ours. However, a critical disparity emerges in the right part of Fig.~\ref{fig:analysis-fidelity} regarding the validation error. The unconstrained methods (GDP, TAO) fail to converge to a low error state, exhibiting a significant gap between training fidelity and ground-truth validation. This reveals a fundamental optimization-generalization gap: without rigorous topological constraints, these models exploit their excess capacity to overfit specific noise realizations or learn physically implausible shortcuts to satisfy data consistency. In contrast, our method achieves stable convergence to a minimal error state via a monotonic descent. By imposing strict monotonicity as a hard inductive bias, our framework confines the search space to the manifold of valid radiometric mappings. This ensures that the minimization of data fidelity acts as a reliable proxy for recovering the true physical operator, effectively preventing the overfitting pathology common in blind inverse solvers.

\textbf{Effectiveness of Cascaded Design.}
While the Bernstein approximation theorem posits that a single polynomial of sufficiently high degree $N$ can approximate any continuous function, we argue that a \textit{cascaded composition} of low-degree polynomials is superior for gradient-based optimization. We validate this architectural choice quantitatively and qualitatively in Fig.~\ref{fig:analysis-cascaded}.
Results indicate that increasing the width (degree) of a single layer is inefficient. Specifically, the configuration with a single high-degree layer ($N=64,K=1$) yields suboptimal performance (10.99 dB). As visualized in Fig.~\ref{fig:analysis-cascaded} (2nd column), this variant fails to lift the signal from the noise floor, resulting in an unenhanced dark output. Theoretically, optimizing high-degree Bernstein polynomials involves determining coefficients via a high-dimensional Softmax ($\mathbb{R}^{64}$), which exacerbates gradient vanishing issues and results in a rugged optimization landscape that hinders convergence. Conversely, our default cascaded design ($N=3, K=8$) achieves state-of-the-art performance with significantly fewer parameters. The visual comparison in Fig.~\ref{fig:analysis-cascaded} further corroborates that this setting restores natural illumination and vivid details. This confirms the advantage of the ``Deeper-over-Wider'' strategy: by composing lightweight atomic operators, CaMB achieves an effective algebraic degree of $N^K$ (exponential expressivity) while maintaining a benign optimization landscape suitable for zero-shot adaptation.

\begin{figure}[t]
\centering
   \includegraphics[width=\linewidth]{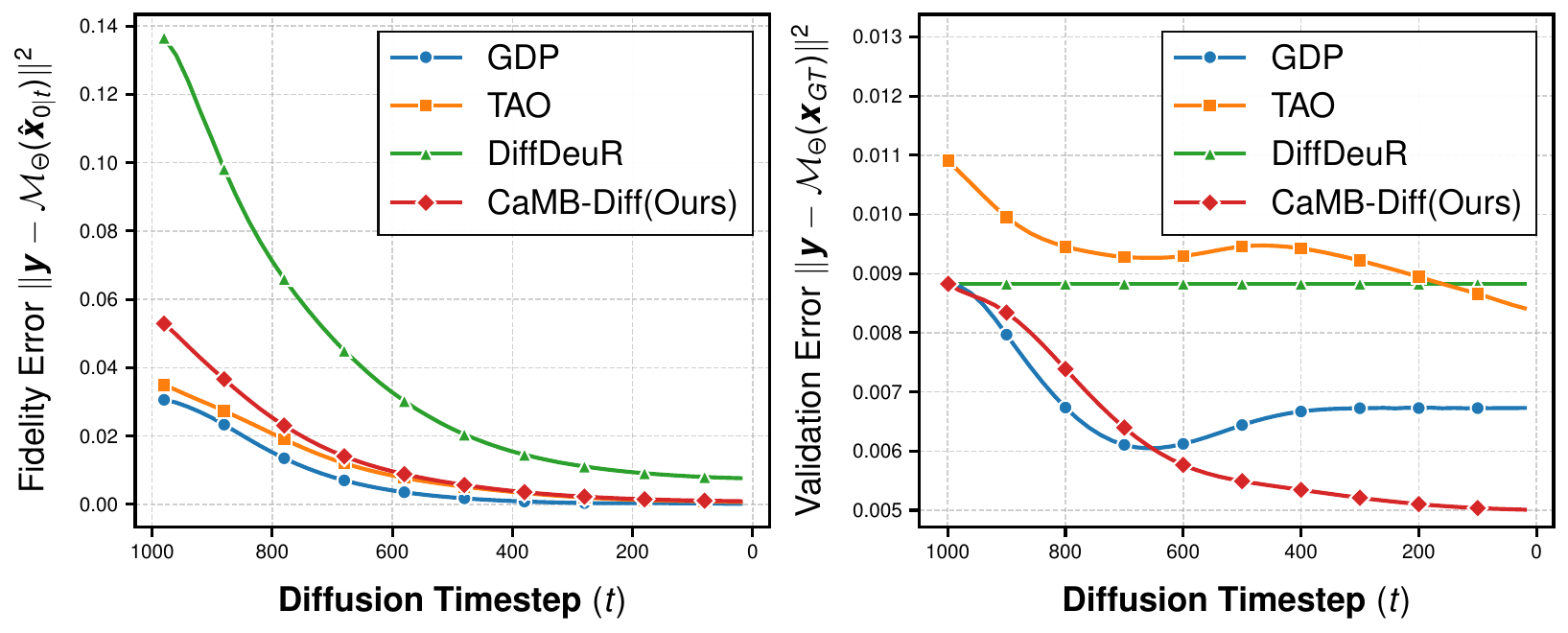} 
\caption{\textbf{Convergence Analysis of Operator Estimation.} We track the dynamics of the learned forward operator across the reverse diffusion process. \textbf{(Left) Fidelity Error:} Most methods successfully minimize the data consistency objective towards the observed measurement $\boldsymbol{y}$. \textbf{(Right) Validation Error:} A critical generalization gap emerges when evaluating against the ground truth. Unconstrained methods (GDP, TAO) fail to converge to a low validation error. In contrast, \textbf{CaMB-Diff} achieves stable convergence to a minimal error state via a monotonic descent.
}
\label{fig:analysis-fidelity}
\end{figure}

\section{Conclusion}
\label{sec:conclusion}
In this paper, we presented CaMB-Diff, a unified zero-shot framework for resolving Inverse Problems under unknown dynamic range compression (DRC). By identifying the common mathematical structure behind diverse physical degradations—from signal starvation in low-field MRI to saturation in HDR imaging—we proposed the Cascaded Monotonic Bernstein (CaMB) operator. This novel parameterization structurally embeds the physical law of monotonicity into the forward model, effectively resolving the dilemma between expressivity and optimization stability inherent in unknown blind solvers. Integrated with generative diffusion priors, our framework successfully recovers latent signals from regimes of severe information loss without paired supervision. Extensive experiments demonstrate that CaMB-Diff not only achieves state-of-the-art fidelity but also ensures rigorous physical consistency, offering a robust paradigm for reliable radiometric recovery in computational imaging.

\section*{Impact Statement}
This paper presents work whose goal is to advance the field of Machine
Learning. There are many potential societal consequences of our work, none
which we feel must be specifically highlighted here.

\nocite{langley00}

\bibliography{example_paper}
\bibliographystyle{icml2026}

\newpage
\appendix
\onecolumn

\section{Experimental Details}

% baseline ----

\subsection{Baseline Details}
\subsubsection{Diffusion based methods}
\textbf{GDP} All experiments are conducted with the original code and default settings as specified in ~\cite{fei2023generative}. For low-field MRI enhancement, we set the guidance scale = 80000.

\textbf{TAO} All experiments are conducted with the original code and default settings as specified in ~\cite{gou2023test}. For low-field MRI enhancement and HDR reconstruction, we set the guidance scale = 50000.

\textbf{AGLLDiff}  Following~\cite{lin2025aglldiff}, all experiments are conducted using the original code and default settings. Images are cropped to $256 \times 256$ to be compatible with the pre-trained diffusion prior.

\textbf{FourierDiff } Following~\cite{lv2024fourier}, all experiments are conducted using the original code and default settings. Images are cropped to $256 \times 256$ to be compatible with the pre-trained diffusion prior.

\textbf{DiffDeuR} We adopt the default setting in~\cite{lin2024zero} with 100 DDIM steps.

% \textbf{DPS} For HDR reconstruction task, we use the $\xi_i = 1 / \| y - \mathcal{A}(\hat{x}_0(x_i)) \|$ in the original code~\cite{chung2022diffusion} with 1000 DDPM steps.

% \textbf{RED-diff} We use $\lambda$ = 0.25 and lr = 0.5 for HDR reconstruction and init the algorithm with random noise for a fair comparison. This might lead to a worse performance compared with the original RED-diff~\cite{mardani2023variational} algorithm.

% \textbf{DAPS} We conducted HDR reconstruction with the original code and default settings as specified in~\cite{zhang2025improving}

\subsubsection{Learning based methods}
 For low-light enhancement comparisons (\textbf{Zero-DCE}~\cite{guo2020zero}, \textbf{RUAS}~\cite{liu2021retinex}, \textbf{RetinexNet}~\cite{wei2018deep}, \textbf{KinD}~\cite{zhang2019kindling}, \textbf{Restormer}~\cite{zamir2022restormer}, \textbf{SNR-Net}~\cite{xu2022snr}, \textbf{Retinexformer}~\cite{cai2023retinexformer}), we employ the official codes with default configurations, training them on the LOLv1 and LOLv2 datasets. For low-field MRI, we train the supervised baseline \textbf{PF-SR}~\cite{man2023deep} on the HCP dataset using the proposed degradation simulation and its default hyperparameters. For \textbf{ULFInr}~\cite{islam2025ultra}, we adhere to its default settings, employing registered 3T high-field images as reference.

 \subsection{Low-Field MRI Simulation Details}
Since paired high-field (HF) and low-field (LF) MRI data from the same subject are scarce and often misaligned, we synthesize realistic LF observations from HF scans following the physics-driven stochastic simulator proposed in~\cite{lin2023low}. This simulation pipeline explicitly models the two fundamental degradations inherent to low-field acquisition: \textit{anisotropic geometric loss} and \textit{radiometric contrast distortion}.

\textbf{High-Field Source Data.} We utilize the 3T high-field T1-weighted (T1w) images from the Human Connectome Project (HCP) dataset~\cite{van2013wu}, acquired with $0.7$mm isotropic resolution. Skull stripping is performed to focus on brain tissue.

\textbf{Physics-driven Degradation Process.} The simulation $\mathbf{y} = \mathcal{M}(\mathbf{x}) + \mathbf{n}$ proceeds in three stages:
\begin{enumerate}
    \item \textbf{Geometric Downsampling (Thick Slices):} To simulate the low-through-plane resolution typical of LF scanners, we apply a 1D Gaussian blurring kernel along the slice direction ($z$-axis) followed by downsampling. We set the target slice thickness to 5mm with a 1mm gap, mimicking the clinical protocol of the 0.36T MagSense 360 scanner~\cite{lin2023low}.
    
    \item \textbf{Radiometric Contrast Transfer:} Unlike natural images where intensity scales linearly, MRI contrast depends on field-strength-dependent tissue relaxation times ($T_1, T_2$). We employ a tissue-specific intensity mapping. First, tissue segmentation masks (White Matter, Grey Matter, CSF) are generated from the HF source using SPM~\cite{ashburner2005unified}. Then, for each tissue class $c$, the HF intensity is scaled by a factor $\nu_c = \mu^{LF}_c / \mu^{HF}_c$, where $\mu^{LF}_c$ is derived from the theoretical signal equation at 0.36T. This step introduces the complex, non-linear (yet monotonic) dynamic range compression that our CaMB operator aims to correct.
    
    \item \textbf{Noise Injection:} Finally, we inject Gaussian-distributed noise to match the low SNR profile of LF scanners. The noise levels are calibrated such that the synthesized images match the SNR distribution of real clinical scans from the LF17 dataset~\cite{lin2023low}.
\end{enumerate}

This rigorous simulation ensures that the synthetic data preserves the topological structure of the brain while exhibiting the authentic \textit{signal starvation} and \textit{contrast compression} artifacts observed in real-world low-field MRI.

\subsection{Metrics Details}
\textbf{Standard Metrics.} We assess reconstruction quality using Peak Signal-to-Noise Ratio (PSNR) and Structural Similarity Index (SSIM) for signal fidelity, and Learned Perceptual Image Patch Similarity (LPIPS)~\cite{zhang2018unreasonable} along with Fréchet Inception Distance (FID)~\cite{heusel2017gans} for perceptual realism. All standard metrics are computed using the official implementations in the \texttt{torchmetrics} library.

\textbf{Lightness Order Error (LOE).} Crucially, to validate the physical consistency of the learned radiometric mapping, we employ the Lightness Order Error (LOE)~\cite{wang2013naturalness}. Unlike pixel-wise error metrics, LOE quantifies the preservation of the \textit{relative intensity order} between the enhanced image and the reference, serving as a direct empirical validation for our monotonicity constraint. The LOE is defined as:
\begin{equation}
    \text{LOE} = \frac{1}{M} \sum_{i=1}^{M} \sum_{j=1}^{M} \left( U(L_i, L_j) \oplus U(L^{\text{ref}}_i, L^{\text{ref}}_j) \right),
\end{equation}
where $M$ is the total number of pixels, and $\oplus$ denotes the exclusive-OR (XOR) operator. The indicator function $U(p, q)$ returns 1 if $p \geq q$, and 0 otherwise. $L_k$ and $L^{\text{ref}}_k$ represent the lightness of the $k$-th pixel in the enhanced and reference images, respectively, calculated as the maximum value among R, G, and B channels ($L_k = \max_{(c \in \{R,G,B\})} I_{k,c}$).

Due to the quadratic computational complexity $O(M^2)$ of the pixel-wise comparison, we perform a $4\times$ bicubic downsampling on the images prior to LOE calculation. A lower LOE score indicates that the method better preserves the natural illumination hierarchy and respects the topological ordinality of the signal.

\section{Theoretical Analysis}

In this section, we provide rigorous proofs for the properties of the proposed Cascaded Monotonic Bernstein (CaMB) operator. We first establish the fundamental properties of the atomic Monotonic Bernstein Polynomial (MBP) layer—specifically its guaranteed monotonicity (Safety) and approximation capability (Completeness). Subsequently, we prove the universality of the cascaded composition.

\subsection{Properties of the Atomic MBP Layer}

Recall that a single MBP layer of degree $N$ is defined as $B_N(z; \boldsymbol{\beta}) = \sum_{k=0}^N \beta_k b_{k,N}(z)$, where $b_{k,N}(z) = \binom{N}{k} z^k (1-z)^{N-k}$ are the Bernstein basis polynomials.
To enforce structural constraints, the coefficients $\boldsymbol{\beta}$ are parameterized via learnable weights $\mathbf{w} \in \mathbb{R}^N$ and a Softmax operation:
\begin{equation}
    \beta_0 = 0, \quad \beta_k = \sum_{j=1}^k [\text{Softmax}(\mathbf{w})]_j, \quad \forall k \in [1, N].
    \label{eq:beta_param}
\end{equation}

\begin{lemma}[Monotonicity / Safety]
\label{lem:mono}
The polynomial $B_N(z; \boldsymbol{\beta})$ parameterized by Eq. \ref{eq:beta_param} is monotonically increasing on the interval $[0, 1]$.
\end{lemma}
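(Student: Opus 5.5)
The plan is to differentiate $B_N(z;\boldsymbol{\beta})$ and rewrite the derivative as a Bernstein polynomial of degree $N-1$ whose coefficients are the forward differences of $\boldsymbol{\beta}$. Once those differences are shown to be nonnegative (in fact strictly positive), the sign of the derivative on $[0,1]$ follows at once, because the basis polynomials are nonnegative there.

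\textbf{Step 1: derivative of the basis.} For $b_{k,N}(z)=\binom{N}{k}z^k(1-z)^{N-k}$ we use the standard identity
\begin{equation*}
\frac{d}{dz} b_{k,N}(z) = N\bigl(b_{k-1,N-1}(z) - b_{k,N-1}(z)\bigr),
\end{equation*}
with the convention $b_{-1,N-1}=b_{N,N-1}\equiv 0$. It follows from the product rule together with $\binom{N}{k}k=N\binom{N-1}{k-1}$ and $\binom{N}{k}(N-k)=N\binom{N-1}{k}$.

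\textbf{Step 2: summation by parts.} Substituting the identity into $B_N'$ and re-indexing gives
\begin{equation*}
\frac{d}{dz}B_N(z;\boldsymbol{\beta}) = N\sum_{k=0}^{N-1}(\beta_{k+1}-\beta_k)\, b_{k,N-1}(z).
\end{equation*}
This index-shifting is the only step that needs care. The boundary terms vanish exactly because of the convention on $b_{-1,N-1}$ and $b_{N,N-1}$.

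\textbf{Step 3: positivity of the differences.} Under the parameterization in Eq.~\ref{eq:beta_param}, $\beta_{k+1}-\beta_k = [\text{Softmax}(\mathbf{w})]_{k+1} = p_{k+1}$ for $k=0,\dots,N-1$. This holds also at $k=0$, since $\beta_0=0$ and $\beta_1=p_1$. Each $p_j = e^{w_j}/\sum_i e^{w_i} > 0$. On $[0,1]$ we have $z^k\ge 0$ and $(1-z)^{N-1-k}\ge 0$, so every $b_{k,N-1}(z)\ge 0$.

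\textbf{Step 4: conclusion.} Combining Steps 2 and 3, $B_N'(z)\ge 0$ on $[0,1]$, which gives monotonicity by the mean value theorem.

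We can in fact get strict monotonicity. Since $\sum_k b_{k,N-1}(z)=1$, we have
\begin{equation*}
B_N'(z)\ge N\min_j p_j>0 .
\end{equation*}

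As a remark on normalization, $B_N(0)=\beta_0=0$ and $B_N(1)=\beta_N=\sum_j p_j=1$. Hence $B_N$ maps $[0,1]$ into $[0,1]$, which is what the cascade later needs. Note also that the case $N=1$ is trivial, since then $B_1(z)=z$.
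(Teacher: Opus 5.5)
Your proposal is correct and follows the paper's proof essentially step for step. Both arguments write $B_N'(z)=N\sum_{k=0}^{N-1}(\beta_{k+1}-\beta_k)\,b_{k,N-1}(z)$, identify the forward differences with the strictly positive Softmax weights, and conclude via nonnegativity and partition of unity of the basis. Your explicit bound $B_N'(z)\ge N\min_j p_j$ is a small sharpening: it holds uniformly on the closed interval $[0,1]$, whereas the paper argues strict positivity only on $(0,1)$.
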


\begin{proof}
The derivative of a Bernstein polynomial of degree $N$ is given by a Bernstein polynomial of degree $N-1$:
\begin{equation}
    \frac{d}{dz} B_N(z; \boldsymbol{\beta}) = N \sum_{k=0}^{N-1} (\beta_{k+1} - \beta_k) b_{k, N-1}(z).
\end{equation}
From our parameterization in Eq. \ref{eq:beta_param}, the difference term is:
\begin{equation}
    \beta_{k+1} - \beta_k = [\text{Softmax}(\mathbf{w})]_{k+1} = \frac{e^{w_{k+1}}}{\sum_{j=1}^N e^{w_j}}.
\end{equation}
Since the exponential function is strictly positive, we have $\beta_{k+1} - \beta_k > 0$ for all $k$.
Furthermore, the Bernstein basis polynomials $b_{k, N-1}(z)$ form a partition of unity and are strictly positive for any $z \in (0, 1)$.
Consequently, $\frac{d}{dz} B_N(z; \boldsymbol{\beta})$ is a convex combination of strictly positive values, implying:
\begin{equation}
    \frac{d}{dz} B_N(z; \boldsymbol{\beta}) > 0, \quad \forall z \in (0, 1).
\end{equation}
This guarantees that the learned mapping is monotonic, preventing intensity inversion. 
\end{proof}

\begin{lemma}[Dense Approximation / Completeness]
\label{lem:uni}
    The set of monotonic Bernstein polynomials is dense in the space of continuous  monotonic functions $\mathcal{H}_\text{mono}$.
\end{lemma}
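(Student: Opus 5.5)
We read the statement in the uniform norm on $[0,1]$: for every continuous non-decreasing $f:[0,1]\to\mathbb{R}$ and every $\varepsilon>0$ there is a degree $N$ and a non-decreasing coefficient vector $\boldsymbol{\beta}$ with $\sup_{z\in[0,1]}|f(z)-B_N(z;\boldsymbol{\beta})|<\varepsilon$. Because the parameterization in Eq.~\ref{eq:beta_param} pins $\beta_0=0$ and $\beta_N=1$, we first reduce to normalized targets $f(0)=0$, $f(1)=1$. A general $f$ is handled by an affine rescaling $a+bB_N$ with $b=f(1)-f(0)\ge 0$, which preserves monotonicity; this is exactly what a trivial output gain/offset supplies. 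The degenerate case $f(0)=f(1)$ is a constant and is trivial.

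\textbf{Step 1: Bernstein operator with sampled coefficients.} Take the classical choice $\beta_k=f(k/N)$. Since $f$ is non-decreasing, $\beta_0\le\beta_1\le\dots\le\beta_N$ automatically. The normalization gives $\beta_0=0$ and $\beta_N=1$, so the increments $\beta_k-\beta_{k-1}$ are non-negative and sum to $1$. Weierstrass's theorem in Bernstein's form then gives $\|B_N f-f\|_\infty\to 0$. We would recall the short Chebyshev/variance argument: the identity $\sum_k (k/N-z)^2 b_{k,N}(z)=z(1-z)/N$, together with uniform continuity of $f$ on $[0,1]$, yields the bound $\omega_f(\delta)+2\|f\|_\infty/(4N\delta^2)$. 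Choose $N$ large so that $\|B_Nf-f\|_\infty<\varepsilon/2$.

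\textbf{Step 2: realizability by the Softmax parameterization.} This is the main obstacle. The increments $p_k=\beta_k-\beta_{k-1}$ must be \emph{strictly} positive to lie in the image of Softmax, but $f$ may be flat on some $[(k-1)/N,k/N]$, giving $p_k=0$. We perturb by setting $\tilde p_k=(1-\tau)p_k+\tau/N$ with small $\tau>0$. This is a strictly positive probability vector, so $w_k=\log\tilde p_k$ realizes it exactly. The cumulative sums change by at most $\tau$ in sup norm, that is $|\tilde\beta_k-\beta_k|\le\tau$. Since the Bernstein basis is a partition of unity on $[0,1]$, it follows that $\|B_N(\cdot;\tilde{\boldsymbol\beta})-B_N(\cdot;\boldsymbol\beta)\|_\infty\le\tau$. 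Taking $\tau<\varepsilon/2$ and applying the triangle inequality finishes the argument.

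Finally, the approximant is itself monotone by Lemma~\ref{lem:mono}, so the monotonic Bernstein polynomials are a subset of $\mathcal{H}_\text{mono}$ (restricted to $[0,1]$) that is uniformly dense in it, as claimed. The domain $\mathbb{R}$ in $\mathcal{H}_\text{mono}$ should be read as a compact interval normalized to $[0,1]$, which is the range on which the layer operates.
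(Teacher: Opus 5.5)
Your proof is correct and takes the same route as the paper: you sample $\beta_k=f(k/N)$, observe that monotonicity of $f$ makes these coefficients non-decreasing, and invoke the uniform convergence of the classical Bernstein operator. Your two extra steps tighten points the paper's short proof skips, since the paper only treats strictly monotonic $f$ and never reconciles the sampled coefficients with the parameterization's pinned values $\beta_0=0$, $\beta_N=1$: the affine rescaling handles the pinned endpoints, and the $\tau$-perturbation makes the increments strictly positive so they lie in the image of Softmax.
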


\begin{proof}
    This follows directly from the classical Bernstein approximation theorem. For any continuous function $f \in C[0,1]$, the sequence of polynomials defined by coefficients $\beta_k = f(k/N)$ converges uniformly to $f$ as $N \to \infty$.
Crucially, if the target function $f$ is strictly monotonic, the sampled coefficients satisfy $\beta_k < \beta_{k+1}$, ensuring the approximator remains within the monotonic subclass. Thus, the structural constraint does not reduce the theoretical approximation capacity limit as $N \to \infty$. 
\end{proof}

\subsection{Proof of Main Theorem: Universality of Cascaded Composition}
\label{app:depp_universary}

\paragraph{Universality of CaMB.}

While Lemma~\ref{lem:uni} guarantees universality as $N \to \infty$, high-degree polynomials suffer from numerical instability (Runge phenomenon). Here, we prove that a cascade of \textit{low-degree} polynomials can achieve universality, offering a superior parameterization.

\begin{theorem}[Deep Universality of CaMB]
    Let $\mathcal{M}_{\text{CaMB}}^{(K, N)}$ denote the hypothesis space of a $K$-layer CaMB operator with fixed degree $N \ge 2$. For any continuous strictly monotonic function $f \in \mathcal{H}_{mono}$ and tolerance $\epsilon > 0$, there exists a depth $K$ such that:
  \begin{equation}
    \inf_{\mathcal{M} \in \mathcal{M}_{\text{CaMB}}^{(K, N)}} \sup_{z \in [0, 1]} | f(z) - \mathcal{M}(z) | < \epsilon.
\end{equation}
  
\end{theorem}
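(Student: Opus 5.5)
The plan is a Lie-algebraic "many small steps" argument. A single layer of fixed degree is not dense, but compositions of near-identity layers can generate a rich enough family of flows.

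\textbf{Normalization.} By the parameterization in Eq.~\ref{eq:beta_param}, every layer satisfies $B_N(0)=\beta_0=0$ and $B_N(1)=\beta_N=1$. By Lemma~\ref{lem:mono} it is also strictly increasing. Hence every element of $\mathcal{M}_{\text{CaMB}}^{(K,N)}$ is an increasing homeomorphism of $[0,1]$ fixing both endpoints. The statement can therefore only hold for $f$ normalized so that $f(0)=0$ and $f(1)=1$ (or after a fixed affine rescaling), and I will prove it in that form. Since the composition is increasing, I will first replace $f$ by a smooth increasing diffeomorphism $h$ with $h'>0$ and $\|f-h\|_\infty<\epsilon/2$. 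I can take $h$ to be a high-degree Bernstein approximant of $f$, as in Lemma~\ref{lem:uni}, then mixed with a small multiple of the identity to force $h'>0$.

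\textbf{Reduction to near-identity factors and vector fields.} Take the isotopy $h_t=(1-t)\,\mathrm{id}+t\,h$, which consists of increasing diffeomorphisms. Then
\[
h=\prod_{i=1}^{L}\bigl(h_{i/L}\circ h_{(i-1)/L}^{-1}\bigr),
\]
and each factor has the form $\mathrm{id}+\tfrac1L v_i+O(L^{-2})$ in $C^1$. Here $v_i$ is a smooth vector field vanishing at $0$ and $1$, i.e.\ $v_i(z)=z(1-z)q_i(z)$. Because all factors are uniformly $C^1$-bounded, a standard Gronwall/telescoping estimate shows the following. If each factor can be approximated in sup norm to accuracy $o(1/L)$ by a composition of layers, then the total error is $o(1)$ as $L\to\infty$. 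So it suffices to realize $\mathrm{id}+\tau\, z(1-z)q(z)+o(\tau)$ by finitely many layers, for every polynomial $q$ and small $\tau$. Density of $z(1-z)\mathbb{R}[z]$ among such vector fields is Weierstrass.

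\textbf{Generating the vector fields (main obstacle).} A degree-$N$ layer near the identity is $z+\sum_k(\beta_k-k/N)\,b_{k,N}(z)$. Its first-order variations span the fields $z(1-z)p(z)$ with $\deg p\le N-2$. For $N=2$ this is the single field $X_0=z(1-z)$, realized by $g_a(z)=z+a\,z(1-z)$ with $a=2\beta_1-1\in(-1,1)$. Negative $a$ is allowed, so approximate inverses are available: $g_{-a}\circ g_a=\mathrm{id}+O(a^2)$. Since $X_0$ alone spans only a one-dimensional algebra, the key step is to extract new directions from the second-order terms. Two routes seem available:
\begin{itemize}
\item Take group commutators $g_a\circ g_b\circ g_{a'}\circ g_{b'}$ with $a+a'=b+b'=0$ but unequal magnitudes. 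These are not flows of $X_0$, so their $O(ab)$ terms are nonzero.
\item Reparametrize time nonlinearly, using the quadratic Taylor term of $g_a$, which differs from $\exp(aX_0)$ by $\tfrac{a^2}{2}(X_0X_0'-X_0X_0')$ plus the actual curvature of the family.
\end{itemize}
I will first compute the two-layer composition $g_a\circ g_b-g_{a+b}$ explicitly. I expect it to yield a nonzero multiple of $ab\,z(1-z)(1-2z)$, a new field $X_1$. Iterated brackets $[X_0,X_1],\dots$ should then produce $z(1-z)z^m$ for all $m$. Chow--Rashevsky-type composition (commutators realize brackets at order $\tau^2$) turns this Lie algebra into approximate group elements. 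Verifying that the generated algebra really is all of $z(1-z)\mathbb{R}[z]$ is where I expect the real work. The fallback is an explicit induction on degree via the bracket formula
\[
[z(1-z)p,\;z(1-z)r]=z(1-z)\bigl(z(1-z)(p r'-p'r)\bigr),
\]
which raises the degree by two each time. Combined with the span from $N\ge 3$ layers, or with the $X_1$ obtained above, this should reach every degree. The accumulated error bounds then give a finite depth $K$ with sup error below $\epsilon$.
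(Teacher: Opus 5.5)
Your plan fails at the step you flagged as the main obstacle, and for $N=2$ it cannot be repaired.

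\textbf{Why the $N=2$ step fails.} Since $g_a=\mathrm{id}+aX_0$ exactly and $\exp(aX_0)=\mathrm{id}+aX_0+\tfrac{a^2}{2}X_1+O(a^3)$ with $X_1=X_0X_0'=z(1-z)(1-2z)$, induction gives, for any fixed word,
\[
g_{a_1}\circ\cdots\circ g_{a_n}=\exp\Bigl(\bigl({\textstyle\sum_i a_i}\bigr)X_0\Bigr)-\tfrac12\Bigl({\textstyle\sum_i a_i^2}\Bigr)X_1+O(|a|^3).
\]
So your commutator $g_a\circ g_b\circ g_{-a}\circ g_{-b}$ equals $\mathrm{id}-(a^2+b^2)X_1+O(|a|^3)$. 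There is no $ab$ term; only $-X_1$ appears, with its sign forced. The approximate inverses $g_{-a}$ have defects of the same order as the brackets you need, so Chow--Rashevsky, which requires symmetric generators, does not apply.

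This is a genuine obstruction, not a technicality. One checks that $g_a'(z)\,z(1-z)-g_a(z)\bigl(1-g_a(z)\bigr)=a^2z^2(1-z)^2\ge 0$. Equivalently, every degree-2 layer has slope $\ge 1$ in the logit coordinate $\xi=\log\frac{z}{1-z}$. This property is preserved by composition. In integrated form, $\mathrm{logit}\,h(z_2)-\mathrm{logit}\,h(z_1)\ge\mathrm{logit}\,z_2-\mathrm{logit}\,z_1$ for $0<z_1<z_2<1$, and this survives pointwise limits. Hence $\mathrm{id}+\tau X_1$ stays at sup-distance of order $\tau$ from every degree-2 CaMB map. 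Your reduction ``realize $\mathrm{id}+\tau z(1-z)q+o(\tau)$ for every $q$'' therefore fails for $q=1-2z$.

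\textbf{The theorem is false for $N=2$.} Take $f(z)=\sqrt z/(\sqrt z+\sqrt{1-z})$, which is continuous, strictly increasing and fixes $0$ and $1$. It satisfies $\mathrm{logit}f(3/4)-\mathrm{logit}f(1/4)=\log 3$, whereas every degree-2 CaMB map has this difference at least $2\log 3$. This forces $\sup_z|f-\mathcal{M}|\ge(2\sqrt3-3)/4\approx 0.116$ for every $\mathcal{M}\in\mathcal{M}_{\text{CaMB}}^{(K,2)}$ and every $K$.

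\textbf{The case $N\ge 3$.} Here your route is sound in outline, and you should state the result that way, with your normalization $f(0)=0$, $f(1)=1$ (this matches the paper's ``assuming normalized bounds''). The first-order span $z(1-z)\mathbb{P}_{N-2}$ already contains $\pm X_0$ and $\pm z^2(1-z)$, since the identity corresponds to an interior point of the simplex. Moreover, $\mathrm{ad}_{X_0}$ acts on $z(1-z)r$ by $r\mapsto z(1-z)r'$, raising $\deg r$ by one, so the generated Lie algebra is all of $z(1-z)\mathbb{R}[z]$. Exact flows are limits of $(\mathrm{id}+\tfrac tn Y)^{\circ n}$. Commutators and Trotter products of these flows realize the whole algebra to $o(\tau)$, and your telescoping estimate closes the argument. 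No second-order extraction is needed.

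\textbf{Comparison with the paper.} The paper uses the same homotopy $(1-t)z+tf(z)$ and near-identity factors $\phi_k$. It then asserts, citing Lemma~\ref{lem:uni}, that a \emph{single} fixed-degree layer approximates each $\phi_k$ within $\epsilon/K$. That density statement needs $N\to\infty$. At fixed $N$ the one-layer error is of order $\frac1K\,\mathrm{dist}\bigl(v_k,z(1-z)\mathbb{P}_{N-2}\bigr)$, which sums to $O(1)$ over $K$ layers. Your recognition that new directions must come from composition is exactly what that argument lacks. For $N=2$, however, no argument can succeed.
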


\begin{proof}
    We construct the proof using the concept of **Monotonic Homotopy**. The goal is to decompose a complex global transformation $f$ into a sequence of simple, local deformations close to the Identity mapping.

\textbf{1. Construction of the Flow.}
Let $I(z) = z$ be the identity map. Since $f(z)$ is strictly monotonic and maps $[0,1]$ to $[0,1]$ (assuming normalized bounds), there exists a continuous homotopy $H: [0,1] \times [0,1] \to [0,1]$ such that $H(z, 0) = I(z)$ and $H(z, 1) = f(z)$. Specifically, we can use the convex combination $H(z, t) = (1-t)z + t f(z)$, which preserves strict monotonicity for all $t \in [0,1]$.

\textbf{2. Layer-wise Discretization.}
We partition the "time" interval $[0, 1]$ into $K$ steps: $0 = t_0 < t_1 < \dots < t_K = 1$. The total transformation $f$ can be expressed as the composition of $K$ transition maps $\phi_k$:
\begin{equation}
    f = \phi_K \circ \phi_{K-1} \circ \dots \circ \phi_1, \quad \text{where } \phi_k = H(\cdot, t_k) \circ H^{-1}(\cdot, t_{k-1}).
\end{equation}
As the number of layers $K \to \infty$, the step size $\Delta t = 1/K$ becomes infinitesimal. The transition map $\phi_k$ approaches the identity:
\begin{equation}
    \phi_k(z) \approx z + \delta_k(z),
\end{equation}
where $\delta_k(z)$ is a microscopic monotonic perturbation.

\textbf{3. Local Approximation via Low-Degree Polynomials.}
The core insight is that while fitting a complex $f$ requires a high degree $N$, fitting a microscopic perturbation $\phi_k$ only requires low-degree flexibility.
We verify that an MBP of degree $N \ge 2$ can approximate Identity and its neighborhood:
\begin{itemize}
    \item If coefficients are linear ($\beta_j = j/N$), $B_N(z)$ exactly reproduces Identity $z$.
    \item If $N \ge 2$, the basis contains quadratic terms ($z^2$), allowing non-zero curvature (second derivatives). This enables $B_N$ to approximate the non-linear perturbation $\delta_k(z)$.
\end{itemize}
Since monotonic polynomials are dense (Lemma~\ref{lem:uni}), for each sub-function $\phi_k$, there exists an MBP layer $B_N^{(k)}$ that approximates it within error $\epsilon/K$. By the stability of functional composition for Lipschitz continuous functions, the total error of the cascade is bounded.

\textbf{Conclusion.}
This constructive proof demonstrates that by increasing depth $K$, a sequence of low-degree monotonic polynomials can approximate any strictly monotonic function $f$ with arbitrary precision. This validates the "Depth-for-Width" trade-off: CaMB achieves exponential expressivity ($N^K$ equivalent degree) via linear parameter growth ($K \cdot N$), avoiding the instability of high-degree polynomials.
\end{proof}

\section{Detailed Ablation Studies}

\subsection{Why Bernstein Polynomials? Comparison with Monotonic-MLP}
To validate the architectural superiority of our Cascaded Monotonic Bernstein (CaMB) operator, we benchmark it against a standard baseline: the Monotonic-MLP. As shown in Table \ref{tab:performance_comparison}, replacing CaMB with a Monotonic-MLP leads to a catastrophic performance drop—degrading PSNR by over 2.2 dB in low-light enhancement and 3.6 dB in HDR reconstruction.

\begin{table*}[ht]
\centering
% \caption{}
\caption{\textbf{Ablation on Operator Architecture: Monotonic-MLP vs. CaMB.} We compare our proposed cascaded Bernstein formulation against a standard Multi-Layer Perceptron (MLP) with monotonic constraints (positive weights). The "Monotonic-MLP" suffers from optimization stiffness, failing to adapt to the diffusion guidance effectively. In contrast, CaMB provides a smoother, more expressive parameterization, yielding significant gains in both signal fidelity (PSNR) and perceptual quality (FID) across Low-light and HDR tasks.}
\label{tab:ablation_mlp}
\label{tab:performance_comparison}
\resizebox{\linewidth}{!}{
\begin{tabular}{lccccccccc}
\toprule
\multirow{2}{*}{Forward Operator} & \multicolumn{5}{c}{Low-light Image Enhancement}  & \multicolumn{4}{c}{High Dynamic Range } \\
\cmidrule(lr){2-6} \cmidrule(lr){7-10}
& PSNR & SSIM & LPIPS & FID & LOE & PSNR & SSIM & LPIPS & FID\\
\midrule
Monotonic-MLP & 17.94 & 0.7158 & 0.3528 & 237.54 & 132.95 
    & 16.18 & 0.6245 & 0.3117 & 82.80  \\
\midrule
CaMB & \textbf{20.14} & \textbf{0.8498} & \textbf{0.2171} & \textbf{72.65} & \textbf{82.21}
     & \textbf{19.85} & \textbf{0.7909} & \textbf{0.2639} & \textbf{70.73} \\

\bottomrule
\end{tabular}
}
\end{table*}

\begin{figure}[t] 
    \centering
    \includegraphics[width=0.8\linewidth]{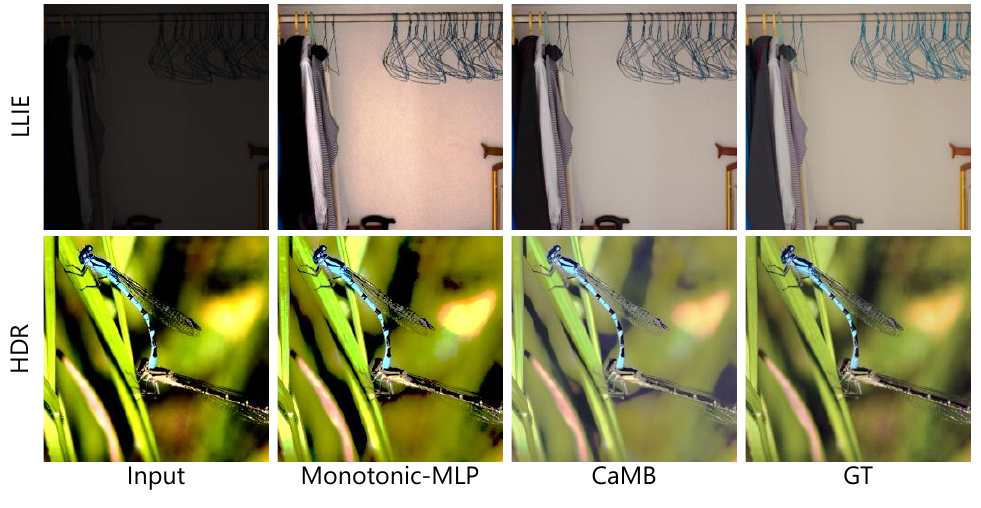} 
    % \caption{Visual comparison of HDR.}
    \caption{\textbf{Qualitative ablation of Operator Design on low-light image enhancement(LLIE) and HDR reconstruction(HDR).} While the Monotonic-MLP (2nd column) avoids intensity inversion, its piecewise-linear nature struggles to model the smooth transition into saturation, resulting in flat, lifeless highlights. CaMB (3rd column) leverages the smoothness of Bernstein polynomials to approximate the continuous radiometric response, recovering richer details in the overexposed regions.}
    \label{fig:main-result-mono}
\end{figure}

Despite conducting an extensive hyperparameter sweep (varying depth from 2 to 5, width from 8 to 64), we observed two persistent failure modes inherent to the Monotonic-MLP architecture:
\begin{enumerate}
    \item \textbf{Optimization Collapse.}
    Enforcing strict monotonicity in MLPs typically involves constraining weights to be positive~\cite{sill1997monotonic, wehenkel2019unconstrained}. We observed that as network depth increases to capture non-linearities, the gradient flow becomes extremely stiff. The positive-weight constraint restricts the optimization path, often causing the network to collapse to a trivial solution (i.e., y=b, where all weights decay towards zero). This creates a "dead" operator that fails to respond to the diffusion guidance, resulting in the high bias observed in Table \ref{tab:performance_comparison}.

    \item \textbf{Lack of Smoothness Inductive Bias.}
    Radiometric response functions (e.g., camera response curves) are typically smooth~\cite{grossberg2003space} . Bernstein polynomials (the core of CaMB) act as a global smooth approximator, providing a favorable inductive bias for this task. In contrast, Monotonic MLPs (typically using ReLU) produce piecewise linear mappings. Fitting a smooth curvature with piecewise linear segments under blind supervision tends to result in jagged artifacts.

\end{enumerate}

\section{Additional Qualitative Results}
% \subsection{Low-light Image Enhancement}

\begin{figure}[h] 
    \centering
    \includegraphics[width=0.75\linewidth]{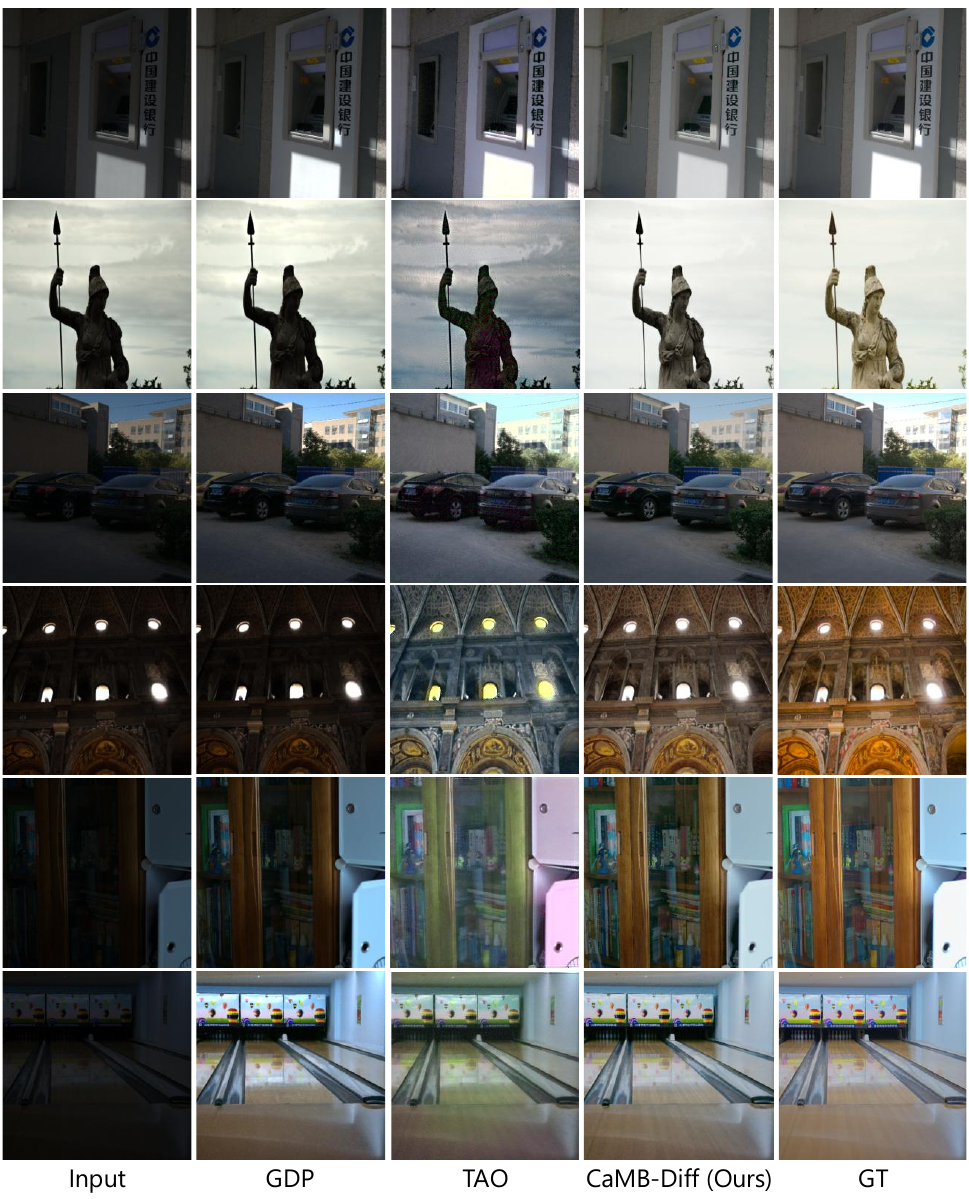} 
    \caption{\textbf{Additional visual comparisons for low-light image enhancment.} We evaluate performance across diverse indoor and outdoor scenes. 
    While GDP often results in under-enhanced contrast and TAO suffers from severe chromatic artifacts (e.g., unnatural color shifts in the statue and cabinet), \textbf{CaMB-Diff} consistently restores natural radiance and accurate colors, closely matching the Ground Truth.
    }
    \label{fig:more-llie}
\end{figure}

\begin{figure}[h] 
    \centering
    \includegraphics[width=\linewidth]{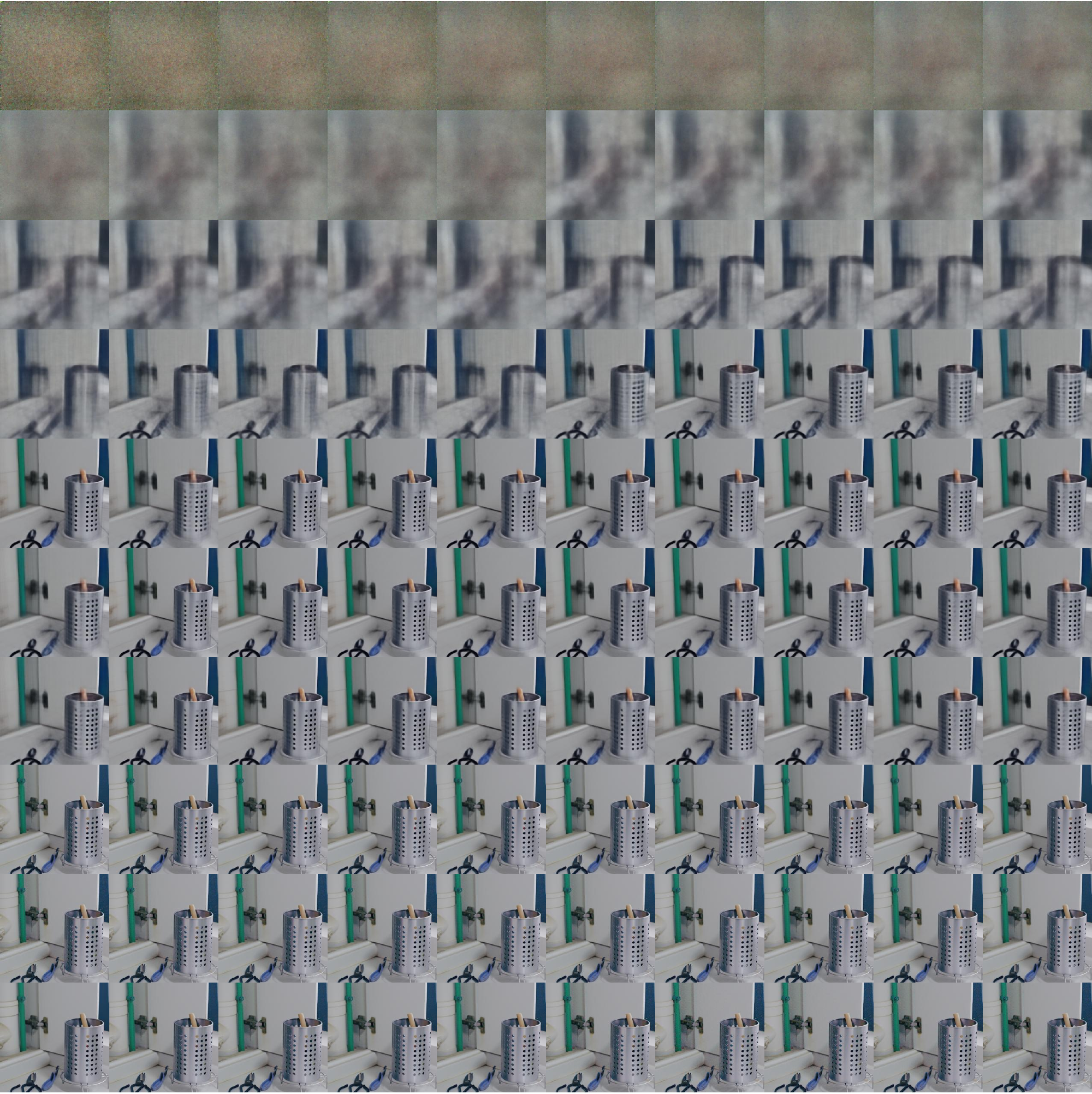} 
    \caption{\textbf{Evolution of the Tweedie Estimate $\boldsymbol{x}_{0|t}$ during Inference.} We visualize the trajectory of the posterior mean estimate $\hat{\boldsymbol{x}}_{0|t}$ across the reverse diffusion process, which demonstrates a stable coarse-to-fine recovery, where the global illumination and object geometry are resolved first, followed by the restoration of fine textures in the saturated regions, guided effectively by our proposed CaMB operator.} 
    \label{fig:main-result-traj}
\end{figure}

\end{document}